\documentclass{dcogpaper}
\usepackage{mathtools,tikz,float,tabularx,xurl}

\DeclareMathAlphabet{\mathsf}{OT1}{lmss}{m}{n}
\SetMathAlphabet{\mathsf}{bold}{OT1}{lmss}{bx}{n}

\title{Topological Sheaves of Irreducibility: A Theoretical Framework for Multi-Agent Task Obstructions}
\author{Steven Motta\thanks{Correspondence: [first name]@distributed.inc.}\and Davide Mambelli\texorpdfstring{\textsuperscript{1}}{}\and Alessandro Rozza\texorpdfstring{\textsuperscript{1}}{}}
\affiliation{distributed cognition}
\date{2026-10-19}
\shorttitle{Topological Sheaves of Irreducibility}
\paperstatus{Preprint}
\paperlogo{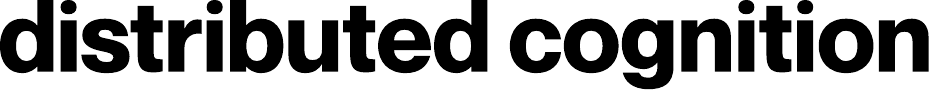}
\papersymbol{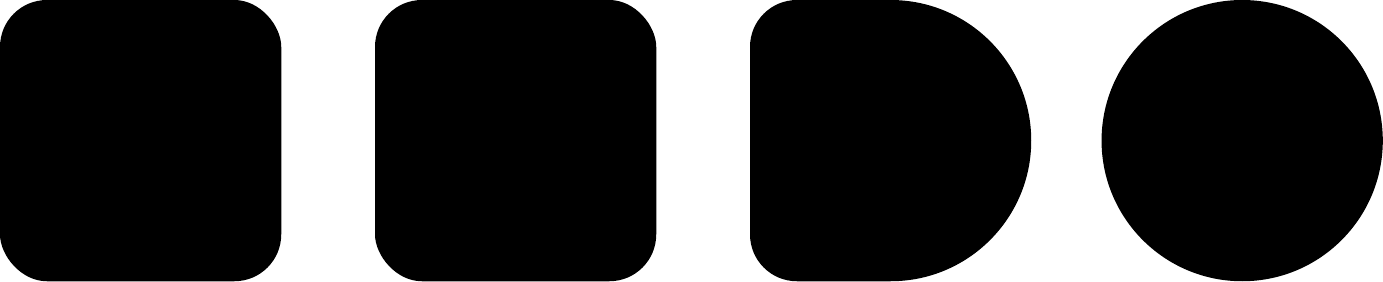}
\papercopyright{\textcopyright\ 2026 distributed cognition. All rights reserved.}
\keywords{multi-agent systems, cellular sheaves, sheaf cohomology, higher-order interactions, distributed computation}

\newtheorem{Definition}{Definition}[section]
\newtheorem{Proposition}{Proposition}[section]
\newtheorem{Hypothesis}{Hypothesis}[section]
\theoremstyle{remark}
\newtheorem{Remark}{Remark}[section]

\begin{document}
\maketitle
\begin{abstract}
Multi-agent systems can differ under identical agents and communication
budgets when they support different forms of joint interaction. We introduce
\textit{Topological Sheaves of Irreducibility} (TSI), a sheaf-theoretic framework
separating task-level compatibility obstructions from system capability. A task is
represented by an interaction complex $K$ and a capacity sheaf $\mathcal{F}$; nonzero
$H^n(K;\mathcal{F})$ defines a structural signature, while registered interventions test
capability. For cocycle data, cochain realization is solvable exactly when the represented cohomology class vanishes. Examples show that the signature is sheaf-relative, and the degree-1 reduction yields cycle-space witnesses and observation requirements. We evaluate
two finite one-round models. Across $15{,}625$ degree-1 inputs, an all-agent additive primitive solves decision and repair exactly; the optimal pairwise system reaches $88.80\%$ on decision, one input above the $88.79\%$ majority-class baseline, and $26.63\%$ on repair. On a degree-2 tetrahedral-boundary task over $531{,}441$ inputs, the same-2-skeleton system retains all four three-agent primitives yet matches the pairwise-only optimum throughout the registered separation window, whereas adding the
class-filling four-agent primitive makes both tasks exact. Status-preserving robustness checks retain exact performance but are not cost-matched to the class-restoring ablation.
Conditional on the registered primitives and cost models, TSI provides task-relative, falsifiable tests of higher-order interaction.
\end{abstract}

\section{Introduction}

Multi-agent systems are often compared by changing the agents, the amount of communication, or the learning algorithm. However, two systems can use the same agents and the same communication budget and still differ because they support different \emph{forms of interaction}. Pairwise communication is not always equivalent to a genuinely joint operation under the same task and communication constraints. This distinction becomes important when a task requires locally produced information to be combined into one globally consistent solution.

A simple example illustrates the issue. Suppose three agents can exchange messages pairwise around a cycle, but cannot perform one operation whose inputs and outcome are jointly available to all three. An ordinary or weighted graph records only pairwise relations. Edge weights can express the cost, strength, or bandwidth of each pairwise channel, but they cannot represent an atomic operation available only to the three agents together. A simplicial complex distinguishes the two cases: three pairwise channels give three edges, while a genuine three-way operation adds the filled 2-simplex they bound. Cellular sheaves then specify what information each interaction carries and how local pieces must agree.

This leads to the central idea of TSI. We separate the \emph{structure of the task} from the \emph{capability of the system}. The task is represented by a pair $(K,\mathcal{F})$, where $K$ describes the task-relevant compatibility structure and $\mathcal{F}$ describes the corresponding data and restriction maps. A nonzero cohomology group $H^n(K;\mathcal{F})$ identifies a sheaf-relative obstruction at cochain degree $n$. We call this a \emph{structural TSI signature}. It is an obstruction carried by the task model, not a score assigned to the system. For the cochain-realization family studied in Section~\ref{sec:correspondence}, Proposition~\ref{prop:correspondence} gives the obstruction an exact realizability meaning in every cochain degree; Sections~\ref{sec:experiment} and~\ref{sec:deg2} instantiate degrees 1 and 2. System capability is evaluated separately by asking whether a registered interaction primitive removes the relevant obstruction and whether performance degrades when that primitive is ablated. Section~\ref{sec:operational} formalises this separation.

This viewpoint differs from several established frameworks for multi-agent synergy and irreducibility. Cooperative game theory assigns value to coalitions~\cite{shapley1953}; partial information decomposition measures information that is available only jointly~\cite{williams2010}; and integrated-information approaches study irreducibility under a different set of assumptions~\cite{tononi2004,oizumi2014}. These frameworks are useful comparison points, but they do not directly encode the combination of interaction geometry and task-specific compatibility constraints that TSI targets.

The cohomological criterion underlying TSI is the standard equivalence between a vanishing class and membership in the image of the coboundary map. We use this criterion to specify distributed tasks and then study their communication requirements. The contribution is the task--system separation, the class-targeted intervention protocol, and the finite-model comparisons under explicit primitive and cost assumptions. The paper develops these points as follows:
\begin{enumerate}
\item \textbf{A clear separation between signature and capability.} We define the structural signature using sheaf cohomology, and we define operational support using behavioural, class-targeted ablations (Definitions~\ref{def:signature} and~\ref{def:operational}).
\item \textbf{A task formulation of the cohomological criterion.} For cochain-realization and consistency-decision tasks, Proposition~\ref{prop:correspondence} records that a cocycle datum is realizable exactly when its cohomology class vanishes. For distributed degree-1 difference constraints, the corresponding cycle-space certificates give Fredholm/Farkas-type infeasibility witnesses (Proposition~\ref{prop:farkas}).
\item \textbf{An observation requirement for the registered task family.} For decision rules that factor through per-simplex observations, Proposition~\ref{prop:support} identifies which $n$-cell data an exact cochain-realizability rule must observe, with the triangle argument as the degree-1 special case.
\item \textbf{Examples showing that topology alone is not enough.} A twisted sheaf can remove a degree-1 class from a cycle, while a representational sheaf can create one on a contractible complex (Sections~\ref{sec:twisted} and~\ref{sec:representational}).
\item \textbf{A practical modelling protocol.} Section~\ref{sec:recipes} gives three recipes for constructing capacity sheaves and requires the task model to be fixed before behavioural evaluation.
\item \textbf{Fixed-task benchmarks in degrees 1 and 2.} We hold each task model fixed and vary only the system interaction structure. The degree-1 experiment gives a same-1-skeleton separation under exhaustive enumeration; the degree-2 tetrahedral-boundary experiment gives the analogous same-2-skeleton separation and adds status-preserving stress tests (Section~\ref{sec:benchmark}).
\end{enumerate}

The intended claim is therefore limited but testable. A nonzero $H^n(K;\mathcal{F})$ identifies a class of task-level compatibility demands. The empirical question is whether systems equipped with the registered interaction structure perform better specifically on tasks carrying the predicted obstruction, and whether that advantage disappears under the corresponding ablation.

\section{Background}

\textbf{Interaction topology.} A graph cannot distinguish three pairwise links from a genuine three-way interaction. A simplicial or cellular complex can: a $k$-simplex records a $(k{+}1)$-way interaction, while an unfilled cycle records a hole. Applied topology uses these objects to encode higher-order relations~\cite{ghrist2014,hatcher2002}. Related work covers higher-order networks~\cite{battiston2020,torres2021,battiston2021,bick2023}, empirical motif and simplex structure~\cite{benson2016}, and Hodge-Laplacian methods on simplicial complexes~\cite{schaub2020,barbarossa2020}. Recent studies further show that the choice of higher-order representation can itself change collective dynamics~\cite{zhang2023}, and that homological structure can be used to analyse empirically reconstructed high-order interaction networks~\cite{feng2024}. Related work combines simplicial lifting, generalized Hodge operators, and distributed decision optimisation for multilayer networks~\cite{yuan2026}. These results reinforce the need to distinguish the chosen interaction representation from the task-specific semantics that TSI places in the capacity sheaf.

\textbf{Cellular sheaves.} A cellular sheaf attaches vector spaces and compatible linear maps to the cells of a complex. Cellular descriptions of sheaf data on stratified spaces trace back to Shepard's thesis~\cite{shepard1985}, with modern computational formulations developed in subsequent work~\cite{curry2014,curry2016}. Existing applications include local measurement systems, sensor consistency, network coding, and structured graph learning~\cite{robinson2014,robinson2017,ghrist2011network,bodnar2022,hansen2019}. Discourse sheaves use restriction maps to model how agents translate shared topics through private frames~\cite{hansen2021}, which is close to the capacity semantics used here. More broadly, recent topological deep-learning work has emphasised learning and message passing on simplicial, cellular, and other non-graph domains~\cite{papamarkou2024,zia2024}. Sheaf cohomology depends on the stalks and restriction maps, so it is not a homotopy invariant of the underlying complex. That dependence lets the model encode task semantics as well as interaction shape.

Sheaf-theoretic local-to-global obstructions also play a central role in contextuality. Abramsky and Brandenburger formulate contextuality and non-locality in terms of compatible local measurement data that fail to admit a global section~\cite{abramsky2011}. This is an important conceptual antecedent for TSI. The present framework differs in using cellular capacity sheaves on registered multi-agent interaction complexes, degree-specific sheaf cohomology as a task-relative structural signature, and separate behavioural interventions to assess system capability.

Abramsky, Mansfield and Soares Barbosa subsequently made the obstruction explicitly cohomological, using \v{C}ech cohomology of an abelian presheaf derived from local sections over measurement contexts to construct cohomological witnesses for contextuality~\cite{abramsky2012cohomology}. This is the closest antecedent to the present construction: both use a non-vanishing cohomological obstruction to witness a failure of local-to-global assembly. In the contextuality setting this witness is sufficient but not necessary, whereas Proposition~\ref{prop:correspondence} gives an exact criterion for the linear cochain-realization family considered here. The settings also differ operationally: there the presheaf is built over measurement contexts, whereas TSI places cellular capacity sheaves on registered multi-agent interaction complexes and couples the structural class to system capability through separately registered interventions (Definition~\ref{def:operational}).

\textbf{Multi-agent capability and irreducibility.} Partial information decomposition studies information available only from variables taken jointly~\cite{williams2010}. Cooperative game theory studies coalition value and superadditivity~\cite{shapley1953}, and integrated-information theory uses its own axioms for irreducibility~\cite{tononi2004,oizumi2014}. TSI instead models the interaction complex and task-specific compatibility constraints. Section~\ref{sec:relation} treats the other quantities as comparison targets.

\section{Formal Framework}

\subsection{Interaction Complexes}

Let $A=\{a_1,\ldots,a_m\}$ be a finite set of agents. We model task-relevant interactions by a finite abstract simplicial complex $K$ on $A$.

\begin{Definition}[Operational simplex criterion]\label{def:opsimplex}
Let $\Pi$ be the registered communication model with its budget. A simplex $\sigma$ with $|\sigma|\geq 2$ is admissible in $K$ only if its agents can perform, under $\Pi$, a joint operation that cannot be simulated within the same budget by relaying over the proper faces of $\sigma$. Such an operation may be a simultaneous observation, an atomic joint decision, or a shared channel with common-knowledge semantics. Message traffic among the agents of $\sigma$ does not by itself make $\sigma$ admissible. A missing simplex means that no such joint operation is registered as available under $\Pi$.
\end{Definition}

\begin{Definition}[Interaction complex]\label{def:complex}
An interaction complex is an abstract simplicial complex $K$ whose vertices are agents and whose $k$-simplices represent registered $(k+1)$-way interactions relevant to a task. Thus $\{a_i,a_j\}\in K$ records a pairwise interaction, whereas $\{a_i,a_j,a_\ell\}\in K$ records a registered three-way interaction that is not reducible, under the specified communication model and budget, to relaying over its proper faces.
\end{Definition}

The construction of $K$ is part of the empirical model and must satisfy the operational simplex criterion of Definition~\ref{def:opsimplex}. In particular, the presence of a simplex records the availability of a registered interaction primitive under the stated model, rather than merely the observation of message traffic among its agents.

\subsection{Capacity Sheaves}

\begin{Definition}[Capacity sheaf]\label{def:sheaf}
Let $K$ be a finite simplicial complex. A \emph{capacity sheaf} $\mathcal{F}$ of finite-dimensional real vector spaces assigns:
\begin{enumerate}
\item a stalk $\mathcal{F}(\sigma)$ to every simplex $\sigma \in K$; and
\item a linear restriction map $\rho_{\sigma\tau} : \mathcal{F}(\sigma) \to \mathcal{F}(\tau)$ for every face relation $\sigma \subseteq \tau$,
\end{enumerate}
subject to $\rho_{\sigma\sigma} = \mathrm{id}$ and $\rho_{\tau\upsilon} \circ \rho_{\sigma\tau} = \rho_{\sigma\upsilon}$ whenever $\sigma \subseteq \tau \subseteq \upsilon$.
\end{Definition}

Here $\mathcal{F}(v)$ is the space of states, plans, estimates, or representations available to agent $v$. For $|\sigma| \geq 2$, $\mathcal{F}(\sigma)$ contains the quantities on which the agents in $\sigma$ must agree or that they must produce jointly. The map $\rho_{\sigma\tau}$ expresses a subgroup's data in the coordinates of the larger interaction. The model specifies what its vectors mean and why the maps preserve task structure. Section~\ref{sec:recipes} gives three constructions. The constant sheaf $\underline{\mathbb{R}}$, with stalk $\mathbb{R}$ and identity maps, is the topology-only baseline.

\subsection{Cochains and Cohomology}

After choosing an orientation for each simplex, define
\begin{equation}
C^k(K;\mathcal{F}) = \bigoplus_{\sigma \in K_k} \mathcal{F}(\sigma),
\end{equation}
where $K_k$ is the set of oriented $k$-simplices. The coboundary maps $\delta^k : C^k(K;\mathcal{F}) \to C^{k+1}(K;\mathcal{F})$ are induced by the restriction maps and signed incidences: for an oriented $(k{+}1)$-simplex $\tau = [v_0, \dots, v_{k+1}]$,
\begin{equation}
(\delta^k f)(\tau) = \sum_{i=0}^{k+1} (-1)^i \, \rho_{d_i\tau,\,\tau}\big(f(d_i\tau)\big), \qquad d_i\tau = [v_0, \dots, \widehat{v_i}, \dots, v_{k+1}].
\end{equation}
The functoriality of $\mathcal{F}$ gives $\delta^{k+1} \circ \delta^k = 0$, yielding the cochain complex
\begin{equation}
0 \longrightarrow C^0(K;\mathcal{F}) \xrightarrow{\;\delta^0\;} C^1(K;\mathcal{F}) \xrightarrow{\;\delta^1\;} C^2(K;\mathcal{F}) \xrightarrow{\;\delta^2\;} \cdots,
\end{equation}
whose cohomology is
\begin{equation}
H^n(K;\mathcal{F}) = \ker(\delta^n) \big/ \operatorname{im}(\delta^{n-1}).
\end{equation}
A class in $H^n(K;\mathcal{F})$ is an $n$-dimensional compatible cochain that is not the coboundary of lower-dimensional data.

\begin{Definition}[Structural TSI signature]\label{def:signature}
A task model $(K, \mathcal{F})$ has a \emph{structural TSI signature of degree $n$} when
\begin{equation}
H^n(K;\mathcal{F}) \neq 0.
\end{equation}
The dimension $\beta^n(K;\mathcal{F}) = \dim H^n(K;\mathcal{F})$ is its multiplicity.
\end{Definition}

\begin{Definition}[Task-relative irreducibility]\label{def:irreducibility}
A cocycle demand $c\in\ker(\delta^n)\subseteq C^n(K;\mathcal{F})$ is \emph{reducible} in $(K,\mathcal{F})$ if
$c\in\operatorname{im}(\delta^{n-1})$, and \emph{irreducible} otherwise. Equivalently,
\[
c \text{ is irreducible}
\quad\Longleftrightarrow\quad
[c]\neq 0 \text{ in } H^n(K;\mathcal{F}).
\]
\end{Definition}
A structural TSI signature of degree $n$ therefore exists precisely when the task model admits at least one irreducible cocycle demand at that degree. Irreducibility in this sense is a property of the pair $(K,\mathcal{F})$ and the cochain degree, not of the agents or their intelligence: it states that the cocycle demand cannot be assembled from $(n{-}1)$-dimensional data over the registered interaction structure. It is distinct from irreducibility in cooperative game theory or integrated information theory; Section~\ref{sec:relation} treats those notions as comparison targets rather than equivalents.

\begin{Remark}[What the signature is not]\label{rem:notintel}
The signature is a property of the pair $(K,\mathcal{F})$, not of the complex alone (Sections~\ref{sec:twisted} and~\ref{sec:representational}), and not of the system's intelligence. Its degree identifies an obstruction \emph{type}, not a ranking; $\beta^n$ should not be compared across tasks without normalisation and a behavioural interpretation.
\end{Remark}

\subsection{Operational Support for TSI Signatures}\label{sec:operational}

Operational support couples the structural signature to system performance and controlled interventions.

\begin{Definition}[Operational support]\label{def:operational}
Let $\mathcal{T}$ be a task family with a registered model $(K,\mathcal{F})$ carrying a structural signature of degree $n$ with class $[\omega] \neq 0$. An \emph{admissible intervention} has a registered action on the interaction complex and one of the following comparison maps.
\begin{enumerate}
\item \textbf{Removal.} Removing simplices gives a subcomplex $\iota:K^-\hookrightarrow K$ and the restriction map
\[
\iota^*:H^n(K;\mathcal{F})\longrightarrow H^n(K^-;\mathcal{F}|_{K^-}).
\]
\item \textbf{Sheaf alteration.} A sheaf morphism $\varphi:\mathcal{F}\to\mathcal{F}'$ over $K$ gives
\[
\varphi_*:H^n(K;\mathcal{F})\longrightarrow H^n(K;\mathcal{F}').
\]
\item \textbf{Addition.} Adding simplices gives an inclusion $\jmath:K\hookrightarrow K^+$ together with a \emph{registered extension} $\widetilde{\mathcal{F}}$ on $K^+$ satisfying $\jmath^{-1}\widetilde{\mathcal{F}}\cong\mathcal{F}$. The induced restriction map is
\[
\jmath^*:H^n(K^+;\widetilde{\mathcal{F}})\longrightarrow H^n(K;\mathcal{F}).
\]
\end{enumerate}
For removal and sheaf alteration, the corresponding map records whether $[\omega]$ survives or is sent to zero. For an addition, no natural map sends $[\omega]$ forward. We therefore call the addition \emph{class-filling} when
\[
[\omega]\notin\operatorname{im}(\jmath^*),
\]
namely when the task class does not extend to the enlarged complex; it is \emph{extension-preserving} when $[\omega]\in\operatorname{im}(\jmath^*)$.

A joint primitive on a participant set $\sigma$ is interpreted relative to a registered base complex $K_0$ through its \emph{primitive closure}
\[
\mathsf{P}_{\sigma}(K_0)
 = \{\tau\subseteq\sigma:\ |\tau|\geq 2,\ \tau\notin K_0\}.
\]
Ablating the primitive removes its entire closure $\mathsf{P}_{\sigma}(K_0)$ and retains the simplices already present in $K_0$. Removing only the maximal simplex is not an admissible primitive ablation. The ablation is \emph{class-restoring} if $[\omega]$ extends again after removal. Controls are matched by registered primitive cost, including external input/output width, per-agent congestion, parameter count, and retained base complex. The raw number of removed simplices is not the matching criterion.

A multi-agent system $S$ \emph{provides operational support for the degree-$n$ TSI signature on $\mathcal{T}$} if:
\begin{enumerate}
\item \textbf{(Performance)} $S$ solves $\mathcal{T}$ above a pre-specified threshold under a fixed communication budget;
\item \textbf{(Class-restoring sensitivity)} for a registered class-filling mechanism used by $S$, primitive-closure ablations that restore $[\omega]$ degrade performance on $\mathcal{T}$ by more than a pre-specified margin; and
\item \textbf{(Status-preserving insensitivity)} matched interventions of equal registered cost that leave the extension status of $[\omega]$ unchanged do \emph{not} produce a comparable degradation.
\end{enumerate}
\end{Definition}

All three conditions are behavioural and falsifiable. Condition (1) without (2) and (3) is an ordinary benchmark result; (2) without (3) cannot distinguish a topology-specific mechanism from generic capacity loss. Operational support is the conjunction.

\begin{Remark}[Why the comparison maps are needed]\label{rem:comparison}
An intervention changes $(K,\mathcal{F})$ and therefore changes the group $H^n$. Comparing a class before and after the intervention requires a canonical map. For additions, this map also requires an explicit sheaf extension $\widetilde{\mathcal{F}}$ on $K^+$; the expression $H^n(K^+;\mathcal{F})$ is otherwise undefined. For example, under $\jmath:\partial\Delta^2\hookrightarrow\Delta^2$ with constant coefficients,
\[
H^1(\Delta^2;\underline{\mathbb{R}})=0\longrightarrow H^1(\partial\Delta^2;\underline{\mathbb{R}})\cong\mathbb{R},
\]
so the boundary generator is not in the image of $\jmath^*$: a 1-cochain with nonzero circulation ceases to be a cocycle once the 2-simplex is present. In relative cellular sheaf cohomology, the long exact sequence of $(K^+,K)$ contains
\[
H^n(K^+;\widetilde{\mathcal{F}})\xrightarrow{\ \jmath^*\ }H^n(K;\mathcal{F})
\longrightarrow H^{n+1}(K^+,K;\widetilde{\mathcal{F}}),
\]
so $[\omega]$ extends exactly when its relative obstruction vanishes. A primitive-level ablation must therefore remove the full closure generated by the primitive. Removing only its maximal simplex would leave some of the primitive's lower faces available.
\end{Remark}

\begin{Remark}[Signature--capability separation]\label{rem:separation}
Definition~\ref{def:operational} keeps the two roles separate. The class $[\omega]$ describes the task's compatibility demand, as made explicit by Proposition~\ref{prop:correspondence}. System capability is the observed ability to meet that demand, either through interaction structure that fills the obstruction or through an aggregation strategy subject to the observation requirements that Proposition~\ref{prop:support} imposes on rules factoring through per-simplex observations.
\end{Remark}

\section{Worked Examples}

\subsection{The Three-Agent Cycle: Topology Creates a Class}\label{sec:triangle}

Consider agents $0, 1, 2$ with pairwise channels but no three-way channel: $K = \partial\Delta^2$, with oriented edges $[0,1]$, $[0,2]$, $[1,2]$ and no 2-simplex. Equip $K$ with the constant sheaf $\underline{\mathbb{R}}$. Then $C^0 \cong \mathbb{R}^3$, $C^1 \cong \mathbb{R}^3$, $C^2 = 0$, and
\begin{equation}
\delta^0 = \begin{pmatrix} -1 & 1 & 0 \\ -1 & 0 & 1 \\ 0 & -1 & 1 \end{pmatrix}.
\end{equation}
For $c = (c_{01}, c_{02}, c_{12}) \in C^1$, define $\ell(c) = c_{01} - c_{02} + c_{12}$. A direct check gives $\ell \circ \delta^0 = 0$, so $\operatorname{im}(\delta^0) \subseteq \ker(\ell)$; both spaces have dimension two, hence they are equal, and since $C^2 = 0$,
\begin{equation}
H^1(K;\underline{\mathbb{R}}) \cong \mathbb{R}^3 \big/ \operatorname{im}(\delta^0) \cong \mathbb{R}.
\end{equation}

Figure~\ref{fig:classfilling} summarizes the class-filling mechanism in degrees~1 and~2 for the constant sheaf.

\begin{figure}[H]
\centering
\begin{tikzpicture}[
  scale=0.9,
  vtx/.style={circle, fill=black, inner sep=1.3pt},
  edg/.style={line width=0.9pt},
  hid/.style={line width=0.7pt, densely dashed, gray!70},
  fil/.style={fill=gray!25},
  sol/.style={fill=gray!45},
  lbl/.style={font=\small},
  panelcap/.style={font=\small, align=center},
  arr/.style={-latex, line width=0.8pt, gray!70}
]

%%%%%%%%%%%%%%%%%%%% (a) boundary of Delta^2 %%%%%%%%%%%%%%%%%%%%
\begin{scope}[shift={(0,0)}]
  \coordinate (a0) at (0,0);
  \coordinate (a1) at (2.2,0);
  \coordinate (a2) at (1.1,1.9);
  \draw[edg] (a0)--(a1)--(a2)--cycle;
  \node[vtx] at (a0){}; \node[vtx] at (a1){}; \node[vtx] at (a2){};
  \node[lbl,below left]  at (a0){$0$};
  \node[lbl,below right] at (a1){$1$};
  \node[lbl,above]       at (a2){$2$};
  \node[panelcap] at (1.1,-1.0)
    {(\textbf{a})~$\partial\Delta^{2}$\\[1pt]$H^{1}\cong\mathbb{R}$};
\end{scope}

\draw[arr] (2.75,0.95) -- (4.05,0.95)
  node[midway, above, font=\scriptsize]{add $[0,1,2]$};

%%%%%%%%%%%%%%%%%%%% (b) Delta^2 %%%%%%%%%%%%%%%%%%%%
\begin{scope}[shift={(4.6,0)}]
  \coordinate (b0) at (0,0);
  \coordinate (b1) at (2.2,0);
  \coordinate (b2) at (1.1,1.9);
  \fill[fil] (b0)--(b1)--(b2)--cycle;
  \draw[edg] (b0)--(b1)--(b2)--cycle;
  \node[vtx] at (b0){}; \node[vtx] at (b1){}; \node[vtx] at (b2){};
  \node[lbl,below left]  at (b0){$0$};
  \node[lbl,below right] at (b1){$1$};
  \node[lbl,above]       at (b2){$2$};
  \node[panelcap] at (1.1,-1.0)
    {(\textbf{b})~$\Delta^{2}$\\[1pt]$H^{1}=0$};
\end{scope}

%%%%%%%%%%%%%%%%%%%% (c) boundary of Delta^3 %%%%%%%%%%%%%%%%%%%%
\begin{scope}[shift={(0,-4.3)}]
  \coordinate (c0) at (0,0);
  \coordinate (c1) at (2.2,0.30);
  \coordinate (c2) at (0.90,0.85);   % rear vertex
  \coordinate (c3) at (1.15,2.10);   % apex
  \fill[gray!18] (c0)--(c1)--(c3)--cycle;
  \fill[gray!10] (c0)--(c2)--(c3)--cycle;
  \fill[gray!10] (c1)--(c2)--(c3)--cycle;
  \draw[hid] (c2)--(c0); \draw[hid] (c2)--(c1); \draw[hid] (c2)--(c3);
  \draw[edg] (c0)--(c1)--(c3)--cycle;
  \node[vtx] at (c0){}; \node[vtx] at (c1){};
  \node[vtx] at (c2){}; \node[vtx] at (c3){};
  \node[lbl,below left]  at (c0){$0$};
  \node[lbl,below right] at (c1){$1$};
  \node[lbl,right]       at (c2){$2$};
  \node[lbl,above]       at (c3){$3$};
  \node[panelcap] at (1.1,-1.0)
    {(\textbf{c})~$\partial\Delta^{3}$: four 2-simplices,\\[1pt]
     hollow interior; $H^{2}\cong\mathbb{R}$};
\end{scope}

\draw[arr] (2.75,-3.35) -- (4.05,-3.35)
  node[midway, above, font=\scriptsize]{add $[0,1,2,3]$};

%%%%%%%%%%%%%%%%%%%% (d) Delta^3 %%%%%%%%%%%%%%%%%%%%
\begin{scope}[shift={(4.6,-4.3)}]
  \coordinate (d0) at (0,0);
  \coordinate (d1) at (2.2,0.30);
  \coordinate (d2) at (0.90,0.85);
  \coordinate (d3) at (1.15,2.10);
  \fill[sol] (d0)--(d1)--(d3)--cycle;
  \draw[hid] (d2)--(d0); \draw[hid] (d2)--(d1); \draw[hid] (d2)--(d3);
  \draw[edg] (d0)--(d1)--(d3)--cycle;
  \node[vtx] at (d0){}; \node[vtx] at (d1){};
  \node[vtx] at (d2){}; \node[vtx] at (d3){};
  \node[lbl,below left]  at (d0){$0$};
  \node[lbl,below right] at (d1){$1$};
  \node[lbl,right]       at (d2){$2$};
  \node[lbl,above]       at (d3){$3$};
  \node[panelcap] at (1.1,-1.0)
    {(\textbf{d})~$\Delta^{3}$: filled\\[1pt]$H^{2}=0$};
\end{scope}

\end{tikzpicture}
\caption{Class-filling in degrees 1 and 2, with the constant sheaf
$\underline{\mathbb{R}}$. (\textbf{a}) Three agents with pairwise channels only
span the boundary of a triangle and carry a degree-1 class;
(\textbf{b}) the three-agent joint operation fills the 2-simplex and the class
vanishes. (\textbf{c}) The tetrahedral boundary retains all four three-agent
operations, yet carries a degree-2 class; (\textbf{d}) only the four-agent
operation fills the 3-simplex and removes it. Dashed segments are the edges
hidden behind the body. In each row the communication graph is identical on
both sides: the difference is the top-dimensional joint operation, not the
number of pairwise channels. Panel (\textbf{c}) is the configuration of the
system $S_{\mathrm{face}}$ in Section~\ref{sec:deg2}, whose exhaustive optimum
coincides with pairwise-only communication throughout the registered separation window.\label{fig:classfilling}}
\end{figure}

The model has a degree-1 structural signature. Adding the 2-simplex $[0,1,2]$ yields $\delta^1(c) = c_{01} - c_{02} + c_{12} = \ell(c)$, so $\ker(\delta^1) = \operatorname{im}(\delta^0)$ and $H^1(\Delta^2;\underline{\mathbb{R}}) = 0$: the joint channel fills the obstruction. Section~\ref{sec:correspondence} gives this example its exact operational meaning.

\subsection{A Twisted Sheaf: Sheaf Structure Destroys the Class}\label{sec:twisted}

Keep $K = \partial\Delta^2$ but replace the constant sheaf with a rank-one \textit{monodromy sheaf} (local system) $\mathcal{F}_\lambda$: all stalks $\mathbb{R}$, all restriction maps the identity except $\rho_{v_0,[0,1]}(x) = \lambda x$ for a fixed $\lambda \in \mathbb{R} \setminus \{0,1\}$. (Invertibility of the restriction maps, $\lambda \neq 0$, is what licenses the term ``monodromy''; the cohomology computation below in fact only uses $\lambda \neq 1$.) Then
\begin{equation}
\delta^0 = \begin{pmatrix} -\lambda & 1 & 0 \\ -1 & 0 & 1 \\ 0 & -1 & 1 \end{pmatrix}, \qquad \det(\delta^0) = 1 - \lambda \neq 0,
\end{equation}
so $\delta^0$ is an isomorphism, giving $H^0(K;\mathcal{F}_\lambda) = 0$ and $H^1(K;\mathcal{F}_\lambda) = 0$. The cycle is unchanged, but its signature vanishes. Here $H^0 = 0$ means that the twisted sheaf has no nonzero global sections. Since $\delta^0$ is invertible, every edge datum $c$ has a unique realizing vertex assignment. One interpretation is a systematic rescaling by agent 0 on the $[0,1]$ channel, such as a calibration or unit conversion. This changes the consistency problem: realization is uniquely solvable for every datum. Thus monodromy around the loop can remove the compatibility obstruction.

\subsection{A Representational Sheaf: Obstruction Without Topology}\label{sec:representational}

Now take $K$ to be a single edge $\tau = [0,1]$, a contractible complex with trivial ordinary cohomology. Define $\mathcal{F}$ by $\mathcal{F}(v_0) = \mathcal{F}(v_1) = \mathbb{R}$, $\mathcal{F}(\tau) = \mathbb{R}^2$, and
\begin{equation}
\rho_{v_0\tau}(x) = (x, 0), \qquad \rho_{v_1\tau}(y) = (y, 0).
\end{equation}
Then $\delta^0(f_0, f_1) = \rho_{v_1\tau}(f_1) - \rho_{v_0\tau}(f_0) = (f_1 - f_0,\, 0)$, so $\operatorname{im}(\delta^0) = \mathbb{R} \times \{0\}$ and
\begin{equation}
H^1(K;\mathcal{F}) \cong \mathbb{R}^2 \big/ \left(\mathbb{R} \times \{0\}\right) \cong \mathbb{R} \neq 0.
\end{equation}
The complex is contractible, yet it has a degree-1 class. The second component of $\mathcal{F}(\tau)$ can represent a jointly produced quantity, such as a cross-check or negotiated parameter, that neither agent can supply from its individual representation. This obstruction comes from the sheaf rather than the topology of $K$.

\begin{Remark}\label{rem:sheafrelative}
Together, the two examples show that a structural TSI signature is not simply a Betti number. Changing the sheaf can remove a class from a cycle or create one on a contractible complex. Empirical work must therefore specify and fix the pair $(K,\mathcal{F})$ before evaluation (Section~\ref{sec:recipes}).
\end{Remark}

\section{The Obstruction--Task Correspondence}\label{sec:correspondence}

To make the signature testable, we relate it to a task family for which the cohomology class determines solvability.

\subsection{Realization and Consistency-Decision Tasks}

Fix a model $(K, \mathcal{F})$. Each edge $e = [i,j]$ carries a datum $c_e \in \mathcal{F}(e)$, such as an observed offset or required relation. The \emph{initial allocation} of these data among agents is part of the registered communication model, not part of the cohomological solvability statement. The correspondence below therefore makes no assumption about which agents initially observe each $c_e$; Section~\ref{sec:benchmark} specifies concrete allocations for the one-round benchmarks.
\begin{itemize}
\item \textbf{Realization task $\mathsf{R}(c)$:} the agents must output local states $f_v \in \mathcal{F}(v)$ such that $\rho_{v_j e}(f_j) - \rho_{v_i e}(f_i) = c_e$ for every oriented edge $e = [v_i, v_j]$; i.e.\ $\delta^0 f = c$.
\item \textbf{Consistency-decision task $\mathsf{D}(c)$:} the agents must output the single bit $[\![\, c \in \operatorname{im}(\delta^0) \,]\!]$.
\end{itemize}
These degree-1 tasks cover familiar distributed problems, including clock synchronisation from pairwise offsets, sensor calibration from pairwise discrepancies, consistency of exchange rates in log coordinates, and feasibility checking for distributed constraints.

More generally, for $n\geq1$ and $c\in C^n(K;\mathcal{F})$, define the \emph{cochain-realization task} $\mathsf{R}^n(c)$ as finding $x\in C^{n-1}(K;\mathcal{F})$ such that $\delta^{n-1}x=c$, and define $\mathsf{D}^n(c)$ as deciding membership in $\operatorname{im}(\delta^{n-1})$. Thus $\mathsf{R}^1=\mathsf{R}$ and $\mathsf{D}^1=\mathsf{D}$. The task-specific interpretation of the $(n-1)$-cochain $x$ must be supplied by the application; Section~\ref{sec:deg2} gives one degree-2 construction.

\begin{Proposition}[Cochain-realization correspondence]\label{prop:correspondence}
Let $(K,\mathcal{F})$ be arbitrary, let $n\geq1$, and let $c\in C^n(K;\mathcal{F})$. Every realizable datum $c\in\operatorname{im}(\delta^{n-1})$ is a cocycle. Moreover, for every $c\in\ker(\delta^n)$, the realization task $\mathsf{R}^n(c)$ is solvable if and only if the cohomology class $[c]$ vanishes in $H^n(K;\mathcal{F})$. If $K$ is $n$-dimensional, $C^{n+1}=0$ makes every $n$-cochain a cocycle. Consequently: (i) if $H^n(K;\mathcal{F})=0$, every cocycle datum is realizable; (ii) if $H^n(K;\mathcal{F})\neq0$ then, for any norm $\|\cdot\|$ on the finite-dimensional space $C^n(K;\mathcal{F})$ and any $\epsilon>0$, every realizable datum has a non-realizable cocycle within distance $\epsilon$; and on cocycles $\mathsf{D}^n(c)=[\![\,[c]=0\,]\!]$.
\end{Proposition}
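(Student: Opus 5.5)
The plan is to read every clause of Proposition~\ref{prop:correspondence} as elementary linear algebra in the finite-dimensional cochain complex $C^\bullet(K;\mathcal{F})$. The only structural input needed is $\delta^{n}\circ\delta^{n-1}=0$, which the paper has already derived from functoriality of the restriction maps.

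I would first prove the first claim. If $c=\delta^{n-1}x$, then $\delta^n c=\delta^n\delta^{n-1}x=0$, so every realizable datum is a cocycle.

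For $c\in\ker(\delta^n)$, solvability of $\mathsf{R}^n(c)$ means by definition that some $x\in C^{n-1}$ satisfies $\delta^{n-1}x=c$. This is exactly the statement $c\in\operatorname{im}(\delta^{n-1})$. Membership in $\operatorname{im}(\delta^{n-1})$ is in turn the definition of $[c]=0$ in the quotient $\ker(\delta^n)/\operatorname{im}(\delta^{n-1})$; this is also Definition~\ref{def:irreducibility} read in reverse. The same chain of equivalences gives $\mathsf{D}^n(c)=[\![\,[c]=0\,]\!]$ on cocycles.

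If $K$ is $n$-dimensional, there are no $(n{+}1)$-simplices. Then $C^{n+1}=0$, so $\delta^n=0$ and every $n$-cochain is a cocycle.

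Consequence (i) is immediate. If the quotient is zero, then $\ker(\delta^n)=\operatorname{im}(\delta^{n-1})$, so every cocycle is realizable.

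Consequence (ii) is the only step with content. Assume $H^n\neq0$ and pick a cocycle $z$ with $[z]\neq0$, that is, $z\in\ker(\delta^n)\setminus\operatorname{im}(\delta^{n-1})$. Given a realizable $c$ and $\epsilon>0$, set $c'=c+tz$ with $0<t<\epsilon/\|z\|$; note $z\neq0$ because $[z]\neq0$.
\begin{itemize}
\item $c'$ is a cocycle, since $\ker(\delta^n)$ is a subspace containing both $c$ and $z$.
\item $[c']=[c]+t[z]=t[z]\neq0$, because $[c]=0$ and $t\neq0$. So $c'$ is not realizable.
\item $\|c'-c\|=t\|z\|<\epsilon$.
\end{itemize}
This holds for any norm, since the argument uses only homogeneity of the norm; finite dimensionality is not even needed here.

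I expect no real obstacle. The only points that need care are the following.
\begin{itemize}
\item The proposition quantifies over cocycle data. Non-cocycles are automatically non-realizable by the first clause, so the equivalence should be stated only on $\ker(\delta^n)$.
\item Orientation conventions must be checked for consistency with $\delta^{n}\delta^{n-1}=0$. This was established when the cochain complex was defined, and I will cite it rather than reprove it.
\end{itemize}
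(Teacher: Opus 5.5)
Your proposal is correct and follows the paper's proof essentially line for line. Solvability of $\mathsf{R}^n(c)$ is membership in $\operatorname{im}(\delta^{n-1})$, the identity $\delta^n\delta^{n-1}=0$ gives the cocycle claim, and (ii) comes from perturbing a realizable $c$ by $t$ times a cocycle with nonzero class, for small $t\neq0$. You are right that only homogeneity of the norm is used, so the paper's appeal to norm equivalence in finite dimension is harmless but unnecessary; your explicit note that the chosen cocycle is nonzero, so that $\epsilon/\|z\|$ is defined, makes precise a point the paper leaves implicit.
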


\begin{proof}
$\mathsf{R}^n(c)$ is solvable if and only if $c\in\operatorname{im}(\delta^{n-1})$. Since $\delta^n\circ\delta^{n-1}=0$, every realizable datum is a cocycle. For $c\in\ker(\delta^n)$, membership in $\operatorname{im}(\delta^{n-1})$ is equivalent to $[c]=0$ in $\ker(\delta^n)/\operatorname{im}(\delta^{n-1})=H^n(K;\mathcal{F})$. If $K$ is $n$-dimensional, then $C^{n+1}=0$. For (ii), fix any norm $\|\cdot\|$ on $C^n(K;\mathcal{F})$; since this space is finite-dimensional all norms on it are equivalent, so the statement does not depend on the choice. Choose a cocycle $\omega$ with $[\omega]\neq0$. For any realizable $c$ and any $t\neq0$, the cochain $c+t\omega$ is a cocycle with $[c+t\omega]=t[\omega]\neq0$, hence not realizable, and $\|(c+t\omega)-c\|=|t|\,\|\omega\|<\epsilon$ once $|t|<\epsilon/\|\omega\|$.
\end{proof}

The proposition gives the same algebraic realizability criterion in every cochain degree. For cocycle data, a non-realizable demand is therefore irreducible in the sense of Definition~\ref{def:irreducibility}. In degree 1, the unknowns are agent states and the criterion has the direct distributed interpretation used below; $\beta^1(K;\mathcal{F})$ counts independent directions of unrealizable edge demand. Section~\ref{sec:deg2} supplies a degree-2 instance in which the unknowns live on edges and the prescribed defects live on triangles.

\subsection{A Worked Reduction: Distributed Difference-Constraint Feasibility}\label{sec:farkas}

Consider a distributed \emph{difference-constraint system}. Agents hold variables $x_v \in \mathbb{R}$, and each edge $e = [i,j]$ of a constraint graph $K$ carries the equation $x_j - x_i = c_e$. The same form appears in clock synchronisation, sensor calibration, no-arbitrage checks on log exchange rates, and equality constraints in scheduling. For the constant sheaf, feasibility is the realization task $\delta^0 x = c$. Related sheaf-based distributed-optimization formulations impose more general homological constraints through sheaf Laplacians~\cite{hansen2019distributed}; here we focus instead on exact feasibility of difference constraints and the corresponding dual infeasibility witnesses. The reduction also identifies the meaning of the dual objects.

\begin{Proposition}[Certificate space and Fredholm/Farkas-type witnesses]\label{prop:farkas}
Let $K$ be a connected graph, $\underline{\mathbb{R}}$ the constant sheaf, and define $\mathcal{A} = (\operatorname{im}\delta^0)^{\perp} \subseteq C^1(K;\underline{\mathbb{R}})^*$. Then:
\begin{enumerate}
\item the system $\delta^0 x = c$ is feasible iff $\eta(c) = 0$ for every $\eta \in \mathcal{A}$;
\item $\mathcal{A} = \ker\big((\delta^0)^{\!\top}\big)$ 
is identified with the \emph{cycle space} under the standard Euclidean pairing of $K$, of dimension $\beta^1(K;\underline{\mathbb{R}}) = |E| - |V| + 1$; for a fixed datum $c$, the Fredholm infeasibility witnesses are precisely the functionals $\eta\in\mathcal{A}$ satisfying $\eta(c)\neq0$. These are the equality-constraint analogue of Farkas certificates;
\item if the system is infeasible, every support-minimal witness is a nonzero scalar multiple of the signed indicator of a simple cycle $\gamma$ with nonzero signed discrepancy $\sum_{e\in\gamma}\pm c_e$~\cite{rockafellar1969}. Conversely, every such cycle provides a certificate, and its constraints form a minimally infeasible subsystem.
\end{enumerate}
\end{Proposition}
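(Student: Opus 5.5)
Part (1) is finite-dimensional linear algebra: in the finite-dimensional space $C^1(K;\underline{\mathbb{R}})$, a subspace equals its double annihilator, $(\operatorname{im}\delta^0)^{\perp\perp}=\operatorname{im}\delta^0$. Hence $c\in\operatorname{im}\delta^0$ iff $\eta(c)=0$ for all $\eta\in\mathcal{A}$. This is the Fredholm alternative in equality form.

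For part (2), we identify $C^1(K;\underline{\mathbb{R}})^*$ with $\mathbb{R}^{E}$ via the Euclidean pairing. Then $\eta\perp\operatorname{im}\delta^0$ iff $\eta^{\top}\delta^0 x=0$ for all $x$, i.e.\ $(\delta^0)^{\top}\eta=0$. Now $(\delta^0)^{\top}$ is the signed boundary map $\partial_1:\mathbb{R}^E\to\mathbb{R}^V$, sending an edge $[i,j]$ to $v_j-v_i$. So $\eta\in\mathcal{A}$ iff $\eta$ has zero net flow at every vertex, which is exactly the cycle space $Z_1(K)$. For the dimension, $K$ connected gives $\ker\delta^0=$ constants, so $\operatorname{rank}\delta^0=|V|-1$ and $\dim\mathcal{A}=|E|-|V|+1$. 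By Proposition~\ref{prop:correspondence}, with $K$ $1$-dimensional so every $1$-cochain is a cocycle, $\dim H^1=|E|-\operatorname{rank}\delta^0$, which is the same number. The witness statement follows from (1): $c$ is infeasible iff some $\eta\in\mathcal{A}$ has $\eta(c)\neq0$, and these $\eta$ are by definition the witnesses. The Farkas analogy is then only a matter of naming.

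For part (3), the first step is the standard fact that the support-minimal nonzero elements of the cycle space are exactly the scalar multiples of signed indicators of simple cycles, i.e.\ the circuits of the graphic matroid. The argument for this fact runs as follows.
\begin{enumerate}
\item A nonzero flow $\eta$ has support in which every vertex has degree $0$ or at least $2$, since a degree-$1$ vertex would carry nonzero net flow. So the support contains a simple cycle $\gamma$.
\item Let $\chi_\gamma$ be the signed indicator of $\gamma$, and subtract $t\chi_\gamma$ with $t=\eta(e_0)/\chi_\gamma(e_0)$ for some edge $e_0\in\gamma$. This kills at least one support edge and creates no new ones.
\item By minimality of the support, $\eta=t\chi_\gamma$.
\end{enumerate}

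The main obstacle is reconciling ``support-minimal witness'' with the witness condition $\eta(c)\neq0$. A witness that is support-minimal among witnesses need not a priori be support-minimal in $\mathcal{A}$. I would handle this through a conformal decomposition:
\begin{enumerate}
\item Write any flow as a sum $\eta=\sum_k t_k\chi_{\gamma_k}$ of simple-cycle flows whose supports lie in $\operatorname{supp}\eta$. This follows by iterating the cancellation step above, with support strictly shrinking each time.
\item If $\eta(c)\neq0$, then by linearity some term has $\chi_{\gamma_k}(c)\neq0$. That term is a witness with support contained in $\operatorname{supp}\eta$.
\item So a witness that is support-minimal among witnesses has the same support as this cycle witness. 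By the uniqueness in the argument above, it is then a multiple of $\chi_{\gamma_k}$.
\end{enumerate}
Also $\chi_\gamma(c)=\sum_{e\in\gamma}\pm c_e$ is the signed discrepancy.

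For the converse, a simple cycle $\gamma$ with nonzero discrepancy has $\chi_\gamma\in\mathcal{A}$ and $\chi_\gamma(c)\neq0$, so it is a certificate by (1) restricted to the edges of $\gamma$. For minimal infeasibility, delete any edge of $\gamma$. The remaining constraints live on a path, which is a forest, and forests have zero cycle space. By (1) applied to that subgraph (component-wise if it is disconnected), the reduced subsystem is feasible.
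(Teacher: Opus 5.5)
Your proposal is correct and follows the paper's proof. It uses the double annihilator for (1) and the identification of $(\delta^0)^{\top}$ with the boundary map, whose kernel is the cycle space, for (2). For (3) it decomposes a witness into simple-cycle flows supported in $\operatorname{supp}\eta$, picks the summand that pairs nonzero with $c$, invokes support-minimality, and finishes with the forest argument. The one difference is that you prove the graph-specific decomposition directly by cycle peeling, and correctly observe that only support containment is needed, not conformality, instead of citing Rockafellar's elementary-vector theory. Your ``uniqueness'' step should say explicitly that $\eta-t\chi_{\gamma_k}$ is a flow supported on the path $\gamma_k\setminus\{e_0\}$ and therefore vanishes by your step 1.
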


\begin{proof}
(1) is Proposition~\ref{prop:correspondence} restated through the annihilator, since $\operatorname{im}(\delta^0)$ is the common kernel of $\mathcal{A}$. (2): identifying $C^1$ with its dual by the standard basis, $\mathcal{A} = (\operatorname{im}\delta^0)^{\perp} = \ker((\delta^0)^{\!\top})$, and $(\delta^0)^{\!\top}$ is the boundary map sending an edge vector to the difference of its endpoint indicators; its kernel is the cycle space, whose dimension is $|E| - |V| + 1$ for connected $K$. The finite-dimensional Fredholm alternative states that $Ax=b$ is infeasible iff there is $y$ with $y^{\!\top}A=0$ and $y^{\!\top}b\neq0$; with $A=\delta^0$, these are exactly the elements $\eta\in\mathcal{A}$ with $\eta(c)\neq0$. (3): recall that the \emph{elementary vectors} of a subspace of $\mathbb{R}^E$ are its nonzero vectors of inclusion-minimal support, and that every vector $\eta$ of the subspace admits a \emph{conformal decomposition} $\eta=\sum_{r}\eta_r$ into elementary vectors of the same subspace with $\operatorname{supp}(\eta_r)\subseteq\operatorname{supp}(\eta)$ and no sign cancellation on any coordinate; moreover, if $\eta$ is not itself a scalar multiple of an elementary vector, then each $\eta_r$ has support \emph{strictly} contained in $\operatorname{supp}(\eta)$~\cite{rockafellar1969}. For the cycle space of a graph the elementary vectors are exactly the signed indicators of simple cycles.

Now let $\eta$ be a witness of inclusion-minimal support, that is, $\eta\in\mathcal{A}$ with $\eta(c)\neq0$ and no $\eta'\in\mathcal{A}$ satisfying $\eta'(c)\neq0$ has support strictly contained in $\operatorname{supp}(\eta)$. Decompose $\eta=\sum_r\eta_r$ conformally. From $\sum_r\eta_r(c)=\eta(c)\neq0$ some summand satisfies $\eta_r(c)\neq0$, so that $\eta_r$ is itself a witness with $\operatorname{supp}(\eta_r)\subseteq\operatorname{supp}(\eta)$. Minimality of $\eta$ forbids strict containment, so $\eta$ is a scalar multiple of an elementary vector, i.e.\ of the signed indicator of a simple cycle $\gamma$, and $\eta(c)\neq0$ says exactly that $\sum_{e\in\gamma}\pm c_e\neq0$. Conversely, the signed indicator of any simple cycle lies in the cycle space, so any simple cycle with nonzero signed discrepancy is a certificate. Finally, deleting any single edge constraint from such a cycle leaves a forest system, which is always feasible; hence the constraints indexed by $\gamma$ form a minimally infeasible subsystem.
\end{proof}

For difference constraints, the cohomological obstruction space is the certificate space, and $\beta^1$ counts its independent directions. For a fixed datum, the active Fredholm/Farkas-type witnesses are the elements that pair nontrivially with it. Harmonic representatives can describe how a certificate class is distributed across cells once inner products are chosen, but they need not be sparse or local. Any localisation claim requires an additional sparsity criterion. Other reductions should provide the same three ingredients: a task family, a registered sheaf, and a dictionary between dual cohomological data and task-specific witnesses.

\subsection{What Local Observation Cannot Decide}

\begin{Proposition}[Triangle case]\label{prop:triangle}
Let $K = \partial\Delta^2$ with the constant sheaf, and let $D : C^1(K;\underline{\mathbb{R}}) \to \{0,1\}$ decide membership in $\operatorname{im}(\delta^0)$. No decision rule that observes only a proper subset of the three edge coordinates computes $D$ on all inputs.
\end{Proposition}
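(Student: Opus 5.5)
The plan is a fooling-set argument built on the explicit description of $\operatorname{im}(\delta^0)$ from Section~\ref{sec:triangle}. There we showed $\operatorname{im}(\delta^0)=\ker(\ell)$ with $\ell(c)=c_{01}-c_{02}+c_{12}$. So $D(c)=[\![\,\ell(c)=0\,]\!]$, and $\ell$ has a nonzero coefficient ($\pm1$) on every edge coordinate. A rule that observes only a proper subset $S$ of the three edges is a function of the restriction $c|_S$. It therefore suffices to exhibit, for each proper $S$, two data $c,c'$ with $c|_S=c'|_S$ but $D(c)\neq D(c')$.

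Fix a proper subset $S$ and choose an edge $e\notin S$. Take $c=0$, so $D(c)=1$. Let $c'=c+t\,\mathbf{1}_e$ for some $t\neq0$, where $\mathbf{1}_e$ is the indicator of edge $e$. Then $c'|_S=c|_S$, because $e\notin S$. Also $\ell(c')=\pm t\neq0$, so $D(c')=0$.

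Any rule $g$ with $D=g\circ(\cdot)|_S$ would then have to take both values 1 and 0 on the same observation, which is impossible. The case $S=\emptyset$ is included: the rule is then constant, while $D$ is not.

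There is no real obstacle. The only point to state carefully is what "observes only a proper subset" means: the rule factors through the coordinate projection onto $S$, and it may be arbitrary, not necessarily linear or continuous, on that observation. With this reading the pair $(c,c')$ is a counterexample for every rule simultaneously.

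If desired, the argument can be strengthened to a genericity statement. For every observed value of $c|_S$ there is exactly one value of the missing coordinate $c_e$ making $\ell(c)=0$. Hence every observation is consistent with both answers, and no rule is correct on even one full fiber. The proof uses only the fact that $\ell$ depends nontrivially on each coordinate, so it would also carry over to the general per-simplex observation requirement (Proposition~\ref{prop:support}).
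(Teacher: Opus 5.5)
Your proof is correct and is essentially the paper's argument: for an unobserved edge $e\notin S$, compare $0$ with a nonzero multiple of $\mathbf{1}_e$. The two data agree on $S$, but $\ell$ vanishes on the first and not on the second. The paper takes $t=1$; your version with arbitrary $t\neq0$ and your remark that every fiber of the observation contains both labels are harmless extensions of the same idea.
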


\begin{proof}
Let $S \subsetneq \{01, 02, 12\}$ and let $e \notin S$ be unobserved. Let $\mathbf{1}_e \in C^1$ be the indicator cochain of $e$ (value $1$ on $e$, $0$ elsewhere). The inputs $0$ and $\mathbf{1}_e$ agree on all coordinates in $S$, yet $0 \in \operatorname{im}(\delta^0)$ while $\ell(\mathbf{1}_e) = \pm 1 \neq 0$, so $\mathbf{1}_e \notin \operatorname{im}(\delta^0)$. Any rule depending only on $S$ answers identically on inputs with different labels.
\end{proof}

\begin{Proposition}[General support lower bound]\label{prop:support}
Let $(K,\mathcal{F})$ be arbitrary and fix $n\geq1$. Let $\mathcal{A}_n=(\operatorname{im}\delta^{n-1})^{\perp}\subseteq C^n(K;\mathcal{F})^*$ be the annihilator of the coboundary image. For an $n$-simplex $\sigma$ write $\iota_\sigma:\mathcal{F}(\sigma)\hookrightarrow C^n(K;\mathcal{F})$ for the inclusion of the corresponding summand, put $\eta|_\sigma:=\eta\circ\iota_\sigma\in\mathcal{F}(\sigma)^*$, and set
\begin{equation}
\begin{aligned}
W_\sigma
&= \operatorname{span}\{\,\eta|_\sigma : \eta\in\mathcal{A}_n\,\}
   \subseteq \mathcal{F}(\sigma)^*,\\
E_n^*
&= \bigcup_{\eta\in\mathcal{A}_n\setminus\{0\}}\operatorname{supp}(\eta)\\
&= \{\sigma\in K_n : W_\sigma\neq0\}.
\end{aligned}
\end{equation}
Suppose an exact decision rule for membership in $\operatorname{im}(\delta^{n-1})$ depends on the datum only through per-simplex observations $c_\sigma\mapsto O_\sigma(c_\sigma)$, and that each $O_\sigma$ is constant on the cosets of a subspace $V_\sigma\subseteq\mathcal{F}(\sigma)$ of ignored directions. Then $V_\sigma\subseteq W_\sigma^{\circ}$ for every $\sigma$, where $W_\sigma^{\circ}=\{y\in\mathcal{F}(\sigma):\phi(y)=0\ \text{for all }\phi\in W_\sigma\}$. Thus every globally ignored linear direction must lie in $W_\sigma^{\circ}$; in particular no simplex of $E_n^*$ may be ignored entirely.
\end{Proposition}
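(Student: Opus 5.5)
The argument generalises the indistinguishability proof of Proposition~\ref{prop:triangle}. We assume some ignored direction violates the claim and build two inputs the rule cannot tell apart but which have different labels.

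Suppose, for contradiction, that for some $n$-simplex $\sigma$ there is $y\in V_\sigma$ with $y\notin W_\sigma^{\circ}$. Then some $\phi\in W_\sigma$ has $\phi(y)\neq0$. Since $W_\sigma$ is the span of restrictions $\eta|_\sigma$, and a linear functional vanishing on every element of a spanning set vanishes on the span, there is a single $\eta\in\mathcal{A}_n$ with $\eta|_\sigma(y)=\eta(\iota_\sigma y)\neq0$.

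Next we form the two inputs $c=0$ and $c'=\iota_\sigma y$.

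\begin{itemize}
\item \textbf{Same observations.} The inputs agree on every summand $\tau\neq\sigma$. On $\sigma$ they differ by $y\in V_\sigma$, so $O_\sigma(0)=O_\sigma(y)$ by coset-constancy. Every per-simplex observation therefore coincides, and the rule outputs the same bit on both.
\item \textbf{Different labels.} Clearly $0\in\operatorname{im}(\delta^{n-1})$. But $\eta(c')\neq0$ while $\eta$ annihilates $\operatorname{im}(\delta^{n-1})$, so $c'\notin\operatorname{im}(\delta^{n-1})$. This is the Fredholm-witness reasoning of Proposition~\ref{prop:farkas}, part (1), in general degree.
\end{itemize}

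This contradicts exactness, so $V_\sigma\subseteq W_\sigma^{\circ}$ for every $\sigma$.

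It remains to handle the set identity and the final clause. The two descriptions of $E_n^*$ agree: $\sigma\in\operatorname{supp}(\eta)$ for some nonzero $\eta\in\mathcal{A}_n$ exactly when $\eta|_\sigma\neq0$ for some $\eta$, which is exactly when $W_\sigma\neq0$. If $\sigma\in E_n^*$, then $W_\sigma\neq0$, so $W_\sigma^{\circ}\subsetneq\mathcal{F}(\sigma)$ by finite-dimensional duality. Hence $V_\sigma=\mathcal{F}(\sigma)$ is impossible, meaning $\sigma$ cannot be ignored entirely.

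The only subtle step is passing from $\phi\in W_\sigma$, a span, to a single annihilator element $\eta$ with $\eta|_\sigma(y)\neq0$; this is the linearity argument above. We also need to read "globally ignored direction" as a direction ignored in one summand, not one spread across several simplices. The per-simplex factorisation is exactly what licenses changing one coordinate at a time.
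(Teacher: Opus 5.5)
Your proof is correct and matches the paper's argument: the same fooling pair $0$ and $\iota_\sigma y$, the same passage from some $\phi\in W_\sigma$ to a single $\eta\in\mathcal{A}_n$ with $\eta(\iota_\sigma y)\neq0$, and the same use of coset-constancy of $O_\sigma$. Your explicit checks of the two descriptions of $E_n^*$ and of the final clause (that $W_\sigma\neq0$ forces $W_\sigma^{\circ}\neq\mathcal{F}(\sigma)$) fill in steps the paper leaves implicit.
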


\begin{proof}
Since $\operatorname{im}(\delta^{n-1})$ is a subspace, $c\in\operatorname{im}(\delta^{n-1})$ if and only if $\eta(c)=0$ for all $\eta\in\mathcal{A}_n$. Suppose $z\in V_\sigma\setminus W_\sigma^{\circ}$ for some $\sigma$. Then some $\phi\in W_\sigma$ has $\phi(z)\neq0$, hence some $\eta\in\mathcal{A}_n$ has $\eta(\iota_\sigma z)\neq0$. Compare the two data $0$ and $\iota_\sigma z$. They agree outside $\sigma$, and on $\sigma$ they differ by $z\in V_\sigma$, along which $O_\sigma$ is constant; the rule therefore returns the same answer on both. Yet $0\in\operatorname{im}(\delta^{n-1})$ while $\eta(\iota_\sigma z)\neq0$ gives $\iota_\sigma z\notin\operatorname{im}(\delta^{n-1})$, so the rule errs on one of them. Hence $V_\sigma\subseteq W_\sigma^{\circ}$. In particular, any linear direction that the observation discards must be annihilated by every restricted witness in $W_\sigma$. Choosing a basis of $\mathcal{F}(\sigma)$ adapted to a decomposition $\mathcal{F}(\sigma)=W_\sigma^{\circ}\oplus U$ gives the coordinatewise reading: no nonzero direction in the complementary subspace $U$ can be ignored by an exact rule.
\end{proof}

For the $m$-cycle with constant sheaf and $n=1$, $\mathcal{A}_1$ is spanned by the full-support functional $\ell_m(c)=\sum_i\pm\,c_{i,i+1}$. Thus $E_1^*$ contains every edge, and any exact decision procedure must depend on each edge datum. This is an observation lower bound: it specifies which inputs the decision may ignore. If edge data are initially local, information from every edge must reach the decision rule; a joint evaluation can aggregate those data in one operation. A round or bit lower bound is a different statement and needs additional assumptions about synchrony, output location, precision, randomisation, and message size; a protocol may aggregate or compress the raw cell values while still depending on all of them. We do not derive such a bound at this level of generality. Sections~\ref{sec:experiment} and~\ref{sec:deg2} prove one-round bottlenecks for two specific registered coordinator-output models, and the combinatorial topology of distributed computing~\cite{herlihy2013} provides tools for the general analysis.

\section{Structural Properties}

\subsection{Invariance Is Sheaf-Relative}

For the constant sheaf, $H^n(K;\underline{\mathbb{R}})$ is ordinary cohomology and is invariant under homotopy equivalence. The boundaries of a triangle and a square therefore carry the same degree-1 signature. For a general capacity sheaf, the shape of $K$ does not determine cohomology (Sections~\ref{sec:twisted} and~\ref{sec:representational}); the relevant object is the pair $(K,\mathcal{F})$ up to sheaf-preserving equivalence. Empirical reports must specify both.

\subsection{Degree Is an Obstruction Type, Not a Ranking}

For the constant sheaf, degree has a geometric interpretation: degree 1 corresponds to cycle-level incompatibility, degree 2 to a two-dimensional cavity, and so on. This interpretation fails for a general capacity sheaf. Section~\ref{sec:representational}, for example, gives a degree-1 class on a contractible complex. In that setting, degree identifies the cochain level at which a compatibility pattern fails to arise from lower-dimensional data. Multiplicities $\beta^n$ also require task-specific normalisation and behavioural interpretation. Proposition~\ref{prop:correspondence} gives a generic algebraic cochain-realization interpretation in every degree, while domain-specific semantics remain task dependent; Section~\ref{sec:deg2} gives one explicit degree-2 instance.

\section{Relation to Other Frameworks}\label{sec:relation}

This section positions TSI with respect to neighbouring approaches to multi-agent interaction, irreducibility, and distributed computation. Table~\ref{tab:relations} highlights the objects each framework studies and identifies empirical comparisons that distinguish their predictions from those of TSI.

\begin{table}[H]
\caption{Conceptual relations. Each row is an empirical comparison target, not an identification.\label{tab:relations}}
\begin{tabularx}{\textwidth}{>{\raggedright\arraybackslash}p{3.2cm}>{\raggedright\arraybackslash}X>{\raggedright\arraybackslash}X}
\toprule
\textbf{Framework} & \textbf{Primary object} & \textbf{Relation to TSI} \\
\midrule
Partial information decomposition~\cite{williams2010} & Joint information unavailable from marginals & TSI represents interaction geometry and compatibility constraints; it assumes no information-theoretic decomposition. A direct empirical comparison is whether PID synergy and $\beta^n$ dissociate on tasks with matched information but different interaction structure. \\
\midrule
Cooperative game theory~\cite{shapley1953} & Coalition value function & TSI requires no transferable utility and compares no coalition values; it identifies sheaf-relative obstructions. A direct empirical comparison is whether coalition superadditivity and TSI signatures dissociate across matched task families. \\
\midrule
Integrated information~\cite{tononi2004,oizumi2014} & Axiomatic integration measure & Both address irreducibility with different primitives; no equivalence is asserted. Shared benchmark tasks could support direct comparison where the corresponding integration measure is well-defined. \\
\midrule
Sheaf-theoretic contextuality~\cite{abramsky2011,abramsky2012cohomology} & Compatibility of local measurement sections with a global section & Shares the local-to-global obstruction pattern, but TSI uses cellular interaction complexes and task-specific capacity sheaves and evaluates system capability separately through registered interventions. \\
\midrule
Sheaf neural networks~\cite{bodnar2022,hansen2019} & Learnable sheaves and sheaf Laplacians & One way to learn capacity sheaves from data (Section~\ref{sec:recipes}, Recipe B). \\
\midrule
Combinatorial topology of distributed computing~\cite{herlihy2013} & Task solvability via subdivisions & Tools for turning Proposition~\ref{prop:support} into model-specific round and bit lower bounds. \\
\bottomrule
\end{tabularx}
\end{table}

\section{Measurement and Evaluation Protocol}\label{sec:protocol}

\subsection{Procedure}

\begin{enumerate}
\item \textbf{Specify the task and interaction semantics.} Define outputs, success criteria, and the communication budget. Apply the operational simplex criterion of Definition~\ref{def:opsimplex}: distinguish pairwise relay from genuine joint interaction, with the inclusion rule written down before data collection.
\item \textbf{Construct or learn the capacity sheaf} using a recipe from Section~\ref{sec:recipes}, and \textbf{freeze it} (pre-register the construction pipeline) before any behavioural evaluation.
\item \textbf{Compute the signature.} Assemble the coboundary matrices and compute
\begin{equation}
\beta^n(K;\mathcal{F}) = \dim\ker(\delta^n) - \operatorname{rank}(\delta^{n-1}).
\end{equation}
With inner products on cochains, the sheaf Hodge Laplacian
\begin{equation}
L^n_{\mathcal{F}} = (\delta^n)^{*}\delta^n + \delta^{n-1}(\delta^{n-1})^{*}
\end{equation}
satisfies $\ker L^n_{\mathcal{F}} \cong H^n(K;\mathcal{F})$~\cite{hansen2019}. Once inner products are chosen, the harmonic representative describes how the obstruction is distributed across cells. It need not be sparse or local; localisation requires a separate criterion or regulariser. These computations reduce to ranks and null spaces of the cochain matrices and are polynomial in matrix size.
\item \textbf{Test class-indexed predictions.} Compare systems with matched agents, parameters, and communication budgets but different interaction complexes. Use task families from Proposition~\ref{prop:correspondence} and matched controls whose classes vanish. Any performance difference should be specific to the class-carrying tasks.
\item \textbf{Run causal ablations per Definition~\ref{def:operational}.} Class-targeted interventions versus class-preserving controls of matched magnitude.
\item \textbf{Report null results.} A nonzero signature with no advantage on its own correspondence tasks is evidence against the capacity interpretation of that sheaf, even when the algebra is correct.
\end{enumerate}

\subsection{Three Recipes for Capacity Sheaves}\label{sec:recipes}

If $\mathcal{F}$ is fitted after the results are known, it can be chosen to produce a desired signature. The following recipes constrain its construction.

\textbf{Recipe A: consistency sheaves from shared variables.} For each agent $v$, let $\mathcal{F}(v)$ be its state or plan space. For each simplex $\sigma$, let $\mathcal{F}(\sigma)$ contain the variables on which the agents of $\sigma$ must agree, such as shared observables, interface variables, or common resources. The map $\rho_{v\sigma}$ extracts agent $v$'s report of those variables. This follows established sensor-integration constructions~\cite{hansen2019,robinson2014,robinson2017}: $H^0$ contains globally consistent joint states, while nonzero $H^1$ can encode compatibility obstructions for cocycle-level inconsistency data. For noisy or approximate assignments, Robinson's consistency radius and consistency filtration provide complementary quantitative measures of disagreement and its localisation~\cite{robinson2020}. The system's interface specification fixes the sheaf before evaluation.

\textbf{Recipe B: learned sheaves via sheaf diffusion.} Parameterise the restriction maps and fit them by minimising sheaf Dirichlet energy on logged traces of \textit{successful} coordination episodes, as in neural sheaf diffusion~\cite{bodnar2022}. Freeze the maps before computing cohomology and evaluating held-out task instances. Fitting the sheaf on evaluation episodes invalidates the TSI claim.

\textbf{Recipe C: belief-fusion sheaves.} This recipe is the one that makes the sheaf-alteration intervention of Definition~\ref{def:operational} (2) physically realisable, since its controls act on $F$ rather than on $K$. The full Gaussian information form $(\Lambda, \eta)$ cannot itself be a stalk because positive-definite precision matrices form an open cone rather than a vector space. One compatible construction fixes $\Lambda_v \succ 0$ as registered model parameters and uses $\mathcal{F}(v) = \mathbb{R}^d$ for the information vectors $\eta_v$, with joint-observation spaces on the simplices and linear observation maps as restrictions. A second uses the ambient vector space $\mathrm{Sym}_d(\mathbb{R}) \times \mathbb{R}^d$ as the stalk and imposes $\Lambda \succ 0$ separately as a feasibility constraint. In either case, changing a restriction map corresponds to a physical intervention such as sensor recalibration, so the controls in Definition~\ref{def:operational} can be implemented in hardware or simulation.

\section{Fixed-Task Benchmarks and Exhaustive Simulations}\label{sec:benchmark}

We evaluate the framework on two finite experiments that can be enumerated exactly. In each experiment, every system receives the same \emph{task model}; only the system interaction structure and registered primitives change. If the task model also changed between systems, the comparison would mix a new problem, a new cohomology group, and a new solver. Section~\ref{sec:experiment} studies a degree-1 cycle task, and Section~\ref{sec:deg2} gives the degree-2 tetrahedral-boundary analogue.

\textbf{Fixed task model.} The task model is $(K_T,\mathcal{F}_T)=(C_m,\underline{\mathbb{R}})$ for all systems. Write the cyclically oriented edges as $e_i=[i,i{+}1\!\!\pmod m]$, fix an integer bound $B\in\mathbb{N}$ with $B\geq 1$, and let $c_i\in\{-B,\ldots,B\}\subset\mathbb{Z}$ be the datum on $e_i$. With this orientation,
\[
\ell_m(c)=\sum_{i=0}^{m-1}c_i.
\]
Hence $\beta^1(K_T;\mathcal{F}_T)=1$, the critical support is every edge, and realizability is $\ell_m(c)=0$. The experiment evaluates every point in $\{-B,\ldots,B\}^m$ under the uniform distribution. Since $K_T$ is a graph, every 1-cochain is a cocycle; $\ell_m(c)=0$ is the realizability condition. A real-valued variant could instead use the promise problem $|\ell_m(c)|\leq\varepsilon$ versus $|\ell_m(c)|\geq2\varepsilon$.

\textbf{Input allocation and outputs.} Agent $i$ initially owns $c_i$, and agent $0$ is the designated coordinator for the primary outputs. The benchmark tasks are:
\begin{itemize}
\item $\mathsf{T}_1$ (decision): agent $0$ outputs whether $[c]=0$;
\item $\mathsf{T}_3$ (designated-edge, minimum-support repair): when $[c]\neq0$, agent $0$ outputs the correction $d=-\ell_m(c)$ to the designated edge $e_0$. The correction has the smallest possible nonzero support, namely one edge. It need not minimize norm, magnitude, or application cost.
\end{itemize}
The realization task $\mathsf{T}_2$ from Proposition~\ref{prop:correspondence} remains part of the formal family. We omit it from the one-round comparison because an all-agent realization generally requires $\Theta(m\log(mB))$ output bits or a richer personalised-output primitive.

\textbf{Communication model $\Pi$.} A system is a pair $(K_S,\Pi)$ over the same $m$ agents, distinct from the fixed task model $K_T$. Communication is synchronous. Let
\[
h_B=\left\lceil\log_2(2B+1)\right\rceil,
\qquad
q_{m,B}=\left\lceil\log_2(2mB+1)\right\rceil.
\]
The registered primitives are:
\begin{itemize}
\item \emph{Pairwise channel} $\{u,v\}$: in one round, each endpoint may send an arbitrary message of at most $b$ bits to the other endpoint.
\item \emph{Joint channel} on $\sigma\subseteq A$: an atomic aggregation primitive whose function is fixed before evaluation. In one round every member of $\sigma$ receives the same $q_{m,B}$-bit value $g_\sigma$ of the members' current inputs. The value is common knowledge within $\sigma$ and counts against each recipient's congestion budget. We register $g_A(c)=\sum_{i=0}^{m-1}c_i=\ell_m(c)$ in advance as an additive reduction. This function, rather than the simplex alone, is what gives the joint channel its operational meaning under Definition~\ref{def:complex}.
\item \emph{Congestion budget}: every agent may send and receive at most $\beta$ bits in total per round across all available communication primitives; the received value of a joint channel is included in this total. Registered instances satisfy
\[
q_{m,B}\leq\beta<(m-1)h_B,
\]
so the joint reduction is admissible while an exact one-round pairwise collection at the coordinator is congested. Such conditions can be found in applications like over-the-air computation \cite{goldenbaum2013}.
\end{itemize}
Every recipient is charged for each invoked joint primitive's $q_{m,B}$-bit output. For every deterministic protocol, the subset of available primitives that it invokes is fixed as part of the registered protocol before the input is revealed; invocation decisions are not input-dependent. Only invoked outputs are charged. The joint primitive is an \emph{atomic aggregation service}: access to its participants' current inputs is part of its registered semantics. The benchmark evaluates the pre-registered reduction $g_A$ under the stated output-width and congestion cost.

The communication model is CONGEST-inspired: it combines synchronous rounds and bounded pairwise messages with an aggregate per-agent bandwidth budget~\cite{peleg2000distributed}. Proposition~\ref{prop:one-round} is a communication-complexity argument~\cite{kushilevitz1997}. TSI identifies the task class and the registered primitive predicted to remove the bottleneck; the lower bound itself comes from the communication model.

\begin{Proposition}[One-round pairwise bottleneck]\label{prop:one-round}
Fix $m\geq3$. In the coordinator-output model above, no deterministic one-round pairwise protocol can solve $\mathsf{T}_1$ whenever $\beta<(m-1)h_B$. This remains true when every non-coordinator has a direct channel to agent $0$.
\end{Proposition}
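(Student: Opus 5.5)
We plan a counting (fooling-set) argument at the coordinator. In one synchronous round, agent $0$'s output can depend only on its own input $c_0$ and the messages it receives in that round. Because only pairwise channels are available, and every message sent in round one is a function of the sender's own input alone, the message from agent $j$ to agent $0$ is some $M_j(c_j)$. Agent $0$ therefore sees $(c_0, M_{j_1}(c_{j_1}),\dots)$, where $j$ ranges over its neighbours in $K_S$. Non-neighbours cannot influence agent $0$ in one round at all. We will first treat the most favourable case, in which every non-coordinator has a direct channel to agent $0$. Removing channels only shrinks the information available to agent $0$, so this case covers the rest.

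Under the congestion budget, the total number of bits agent $0$ receives is at most $\beta$, so $\sum_{j\neq0}|M_j|\le\beta<(m-1)h_B$. Hence some $j\neq0$ is allotted $b_j<h_B$ bits. Because invocation and message lengths are fixed before the input is revealed, $b_j$ does not depend on the input. Since $2^{b_j}\le 2^{h_B-1}<2B+1$, the map $M_j$ restricted to $\{-B,\dots,B\}$ is not injective, so there are $x\neq y$ with $M_j(x)=M_j(y)$.

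For the fooling pair, we choose the remaining coordinates so that $\ell_m(c)=0$ when $c_j=x$. With $c_j=y$ the sum then becomes $y-x\neq0$. Here $m\ge3$ provides at least one free coordinate other than $j$. We need $\sum_{i\neq j}c_i=-x$ with each $|c_i|\le B$, and this is feasible because $|x|\le B$. Both inputs agree on every coordinate other than $j$, and the coordinator's views coincide because $M_j(x)=M_j(y)$. Yet $\mathsf{T}_1$ demands different outputs, since $[c]=0$ exactly when $\ell_m(c)=0$ by Proposition~\ref{prop:correspondence} and the fixed task model. This is a contradiction.

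The main obstacle is making the view-of-agent-$0$ claim airtight in the model's terms. Two points need care. First, message lengths must be input-independent, since an adaptive length could in principle encode information; the registered-protocol convention fixing invocations, together with a fixed $b$-bit cap, handles this. We will pad all messages to their fixed maximal length so that the per-sender budget $b_j$ is well defined. Second, bits sent by agent $0$ do not help it within the same round. If the congestion budget is read as a combined send-plus-receive total, the received share is still at most $\beta$, so the bound survives.
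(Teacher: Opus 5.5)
Your proposal is correct and is essentially the paper's proof. The paper uses the same fooling pair (one other coordinate set to $-x$, the rest zero) to show that each non-coordinator's message to agent $0$ must be injective on $\{-B,\ldots,B\}$, hence carry $h_B$ bits, and then sums over the $m-1$ senders; your pigeonhole step is just the contrapositive of that count.

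One point to tidy is the message-length remark. The bound rests on the convention, which the paper also uses implicitly, that a $b$-bit message takes at most $2^{b}$ values. That should be stated as part of the model rather than derived by zero-padding, because zero-padding variable-length strings can merge distinct messages.
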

\begin{proof}
Consider any non-coordinator agent $i\in\{1,\ldots,m-1\}$. If its one-round message to agent $0$ were the same for two distinct values $x,x'\in\{-B,\ldots,B\}$, choose an index $j\in\{1,\ldots,m-1\}\setminus\{i\}$ and set $c_j=-x$, all remaining coordinates (including $c_0$) to zero. The two inputs obtained by setting $c_i=x$ and $c_i=x'$ give identical messages to the coordinator, because every other local input is unchanged, yet the first has $\ell_m(c)=0$ and the second has $\ell_m(c)=x'-x\neq0$. Hence agent $i$'s message must distinguish all $2B+1$ possible values and therefore contains at least $h_B$ bits. This holds for each of the $m-1$ non-coordinator agents. In one synchronous round their information can reach agent $0$ only through messages incident to agent $0$, which require at least $(m-1)h_B$ received bits in total. The congestion bound rules this out.
\end{proof}

\textbf{Systems.} All systems use the same agents, input allocation, and external per-agent congestion budget. For the primary comparison, let
\[
K_0=\operatorname{skel}_1(\Delta^{m-1})
\]
be the complete pairwise 1-skeleton on the $m$ agents.
\begin{itemize}
\item $S_{\mathrm{pair}}$: pairwise channels on $K_0$ only, with no higher-dimensional simplex. The outer-cycle task class extends to this graph: take a representative on $C_m$ and set it to zero on the added chords. Because $K_0$ is one-dimensional, this remains a cocycle whose restriction is the original nonzero class.
\item $S_{\mathrm{fill}}$: the same complete pairwise channels plus one joint channel on all $m$ agents. Its induced interaction complex is the full simplex $\Delta^{m-1}$, equipped with the canonical constant-sheaf extension. The restriction map
\[
H^1(\Delta^{m-1};\underline{\mathbb{R}})=0
\longrightarrow H^1(C_m;\underline{\mathbb{R}})
\]
has zero image, so the task class does not extend and the joint primitive is class-filling. The registered reduction $g_A=\ell_m$ solves $\mathsf{T}_1$ and $\mathsf{T}_3$ in one round because the tasks ask for its zero-test and designated-edge repair.
\item $S_{\mathrm{ring}}$: pairwise ring channels only, with induced complex $C_m$. This is a base-topology sanity check, not the primary ablation control.
\end{itemize}

\textbf{Same-1-skeleton primitive-closure ablation.} Relative to the registered system base $K_0$, the all-agent joint primitive in $S_{\mathrm{fill}}$ generates
\[
\mathsf{P}_{A}(K_0)=\{\tau\subseteq A:\ |\tau|\geq3\}.
\]
The class-targeted ablation removes the joint primitive and every higher-order face in this closure, leaving the complete pairwise 1-skeleton unchanged. The resulting system is $S_{\mathrm{pair}}$, and the task class again extends to its one-dimensional complex. This is the full primitive-closure removal required by Definition~\ref{def:operational}; pairwise communication is identical on both sides of the comparison.

\subsection{Degree-1 Exhaustive Simulation}\label{sec:experiment}

We instantiate $m=6$, $B=2$, so $h_B=3$, $q_{m,B}=5$, and $\beta=5\in[q_{m,B},(m-1)h_B)$. We set $b=\beta=5$ and enumerate all $5^6=15{,}625$ inputs; $1{,}751$ ($11.21\%$) are realizable and $13{,}874$ are unrealizable. The deterministic simulation and exhaustive optimiser are provided in \texttt{benchmark\_sim.py} in the versioned code repository~\cite{distributedcognition2026}. Execution instructions and expected results are provided in the repository README.

The filled system computes $g_A$ within budget and solves $\mathsf{T}_1$ and $\mathsf{T}_3$ exactly. In $S_{\mathrm{pair}}$, only messages received by agent $0$ can affect its output after one synchronous round. Edges among non-coordinators do not change that view. A deterministic $b_i$-bit message from agent $i$ partitions $\{-2,-1,0,1,2\}$ into at most $2^{b_i}$ blocks. For any fixed tuple of partitions, the optimal coordinator chooses the majority label for $\mathsf{T}_1$ and the modal correction for $\mathsf{T}_3$ in each induced view. Refinement cannot reduce either optimum. We therefore search maximal partitions for the undominated receive-bit allocations summing to $\beta=5$: $(1,1,1,1,1)$, $(2,1,1,1)$, $(2,2,1)$, $(3,1,1)$, and $(3,2)$. Allocations granting any single sender more than $h_B=3$ bits, such as $(4,1)$ and $(5)$, are dominated and need not be searched: three bits already distinguish all $2B+1=5$ values of that sender's input, so additional bits cannot refine its message partition and can instead be allocated to another sender, as in $(3,2)$. This search covers every deterministic one-round complete-pairwise protocol under the registered receive budget. Table~\ref{tab:benchmark} reports the exact optima.

\begin{table}[H]
\caption{Exhaustive evaluation for $m=6$, $B=2$, and $\beta=5$. $\mathsf{T}_1$ is evaluated on all $15{,}625$ inputs; $\mathsf{T}_3$ on the $13{,}874$ unrealizable inputs.\label{tab:benchmark}}
\centering
\begin{tabular}{lccc}
\toprule
\textbf{System} & $\boldsymbol{\mathsf{T}_1}$ & $\boldsymbol{\mathsf{T}_3}$ & \textbf{Control} \\
\midrule
$S_{\mathrm{fill}}$ (atomic $g_A$) & $1.0000$ & $1.0000$ & $1.0000$ \\
$S_{\mathrm{pair}}$ (same 1-skeleton) & $0.8880$ & $0.2663$ & $1.0000$ \\
$S_{\mathrm{ring}}$ (sanity check) & $0.8879$ & $0.1687$ & $1.0000$ \\
Majority-class baseline & $0.8879$ & --- & --- \\
\bottomrule
\end{tabular}
\end{table}

The complete-pairwise optimum decides $13{,}875$ of $15{,}625$ inputs correctly, a single input more than the always-unrealizable majority-class baseline and therefore essentially at that baseline at this budget, and returns the exact repair on $3{,}694$ of $13{,}874$ unrealizable inputs. The ring matches the baseline on $\mathsf{T}_1$ and returns $2{,}340$ exact repairs. Removing the higher-order primitive while preserving the full 1-skeleton therefore loses $1{,}750$ correct decisions and $10{,}180$ exact repairs. The control task is $c_0+c_1=0$, whose registered model has no relevant cycle class. One $h_B=3$ bit message over the edge suffices, so every system solves it exactly. The observed advantage is specific to the class-carrying tasks in this experiment.

\textbf{Benchmark results.} Proposition~\ref{prop:one-round} and the one-round solution of $S_{\mathrm{fill}}$ follow analytically from $\Pi$ and the registered additive reduction. The entries in Table~\ref{tab:benchmark} are exact finite computations: the script enumerates the input domain and all deterministic protocols in the stated search. The degree-1 experiment establishes the performance component, a class-restoring primitive-closure ablation, and task specificity through the trivial-cohomology control. Section~\ref{sec:deg2} adds a degree-2 same-2-skeleton benchmark and status-preserving stress tests; an exactly cost-matched status-preserving intervention remains part of Hypothesis~\ref{hyp:ablation}.

\textbf{Pre-registered hypotheses for learned agents.}
\begin{Hypothesis}[Learned-agent scaling]\label{hyp:cost}
For bounded learned agents, the success--round trade-off of $S_{\mathrm{ring}}$ and $S_{\mathrm{pair}}$ worsens with $m$, whereas $S_{\mathrm{fill}}$ approaches the registered one-round optimum.
\end{Hypothesis}
\begin{Hypothesis}[Specificity]\label{hyp:specificity}
$S_{\mathrm{fill}}$'s advantage over $S_{\mathrm{pair}}$ appears on $\mathsf{T}_1$ and $\mathsf{T}_3$, whose fixed model carries the class, and does not appear on matched control tasks over the same data whose registered models carry no relevant class.
\end{Hypothesis}
\begin{Hypothesis}[Ablation symmetry]\label{hyp:ablation}
Removing the full primitive closure $\mathsf{P}_{A}(K_0)$ from $S_{\mathrm{fill}}$ returns the system to the complete pairwise 1-skeleton and restores extension of the task class; for learned agents, it degrades $\mathsf{T}_1$ and $\mathsf{T}_3$ more than a matched extension-status-preserving intervention of equal registered cost.
\end{Hypothesis}

\textbf{Falsification conditions.} The learned-agent interpretation fails under the registered budgets if $S_{\mathrm{fill}}$ has no advantage on $\mathsf{T}_1$ and $\mathsf{T}_3$, if extension-status-preserving controls produce the same advantage, or if the advantage appears equally on tasks without the class. The finite protocol claim has a simpler test: a deterministic one-round pairwise protocol that matches $S_{\mathrm{fill}}$ within the stated congestion budget would contradict Proposition~\ref{prop:one-round} or expose an error in the simulation assumptions.

\subsection{A Degree-2 Benchmark with Status-Preserving Stress Tests}
\label{sec:deg2}

The experiment of Section~\ref{sec:experiment} fixes a degree-1 task model. We now repeat the design one cochain degree higher. The degree-2 benchmark is implemented in \texttt{benchmark\_deg2.py}; the accelerated three-compressor routine for the full separation window is in \texttt{fastopt3.py}~\cite{distributedcognition2026}. This has two purposes: it instantiates Proposition~\ref{prop:correspondence} in degree $2$, and it tests whether the class-filling separation persists when all three-agent interactions are retained. We also report status-preserving stress tests relevant to condition~(3) of Definition~\ref{def:operational}. These stress tests are deliberately described as directional controls rather than as a proof of full operational support, because their registered costs are not exactly matched to the class-restoring ablation in every cost dimension.

\textbf{Fixed task model.}
Let $A=\{0,1,2,3\}$ and let the task model be $(K_T,\mathcal{F}_T)=(\partial\Delta^3,\underline{\mathbb{R}})$, the boundary of the tetrahedron with the constant sheaf: all four vertices, all six edges, all four triangles, and no $3$-simplex. Then
\[
C^0\cong\mathbb{R}^4,\qquad C^1\cong\mathbb{R}^6,\qquad C^2\cong\mathbb{R}^4,\qquad C^3=0,
\qquad \operatorname{rank}\delta^0=\operatorname{rank}\delta^1=3,
\]
so $H^0\cong\mathbb{R}$, $H^1(K_T;\underline{\mathbb{R}})=0$, and
\[
H^2(\partial\Delta^3;\underline{\mathbb{R}})
\cong \mathbb{R}^4/\operatorname{im}\delta^1
\cong \mathbb{R},
\qquad
\beta^2(K_T;\mathcal{F}_T)=1.
\]
Orient the triangles as the faces of $[0,1,2,3]$. Under the standard Euclidean pairing, the annihilator
$\mathcal{A}_2=(\operatorname{im}\delta^1)^\perp$ is spanned by the total-flux functional
\begin{equation}
\ell_3(c)=c_{123}-c_{023}+c_{013}-c_{012}.
\label{eq:ell3}
\end{equation}
Equivalently, if the missing $3$-simplex is added, then evaluating $\delta^2 c$ on $[0,1,2,3]$ gives $\ell_3(c)$. Hence $\ell_3\circ\delta^1=0$. Since $K_T$ is $2$-dimensional, every $2$-cochain is a cocycle, and Proposition~\ref{prop:correspondence} gives
\[
\mathsf{R}^2(c)\ \text{solvable}
\quad\Longleftrightarrow\quad
[c]=0
\quad\Longleftrightarrow\quad
\ell_3(c)=0.
\]
By Proposition~\ref{prop:support}, the critical support $E_2^*$ is all four $2$-simplices.

This is the degree-2 counterpart of Section~\ref{sec:triangle}: there, the task class lies on the boundary of a triangle; here it lies on the boundary of a tetrahedron. Adding $[0,1,2,3]$ gives $\delta^2(c)([0,1,2,3])=\ell_3(c)$, so $\ker\delta^2=\operatorname{im}\delta^1$ and $H^2(\Delta^3;\underline{\mathbb{R}})=0$. The corresponding four-agent system primitive is therefore class-filling for this task class.

\textbf{Task family.}
The unknowns now live on the edges. Each edge $e=\{i,j\}$ carries a bilateral parameter $x_{ij}\in\mathcal{F}_T(e)$, and each triangle $\sigma$ carries a prescribed \emph{closure defect} $c_\sigma$. In the additive constant-sheaf model, $(\delta^1x)(\sigma)$ is the oriented sum of the three edge parameters around $\sigma$. The realization task $\mathsf{R}^2(c)$ asks for edge parameters satisfying $\delta^1x=c$, and $\mathsf{D}^2(c)$ asks whether the prescribed triangle-defect profile is realizable.

Algebraically, realizability requires the oriented sum of the four face defects to vanish, $\ell_3(c)=0$. This is the condition used by the benchmark. For the constant sheaf it also has a standard discrete-exterior-calculus interpretation: the unknown $1$-cochain $x$ is a discrete vector potential on the edges, the prescribed $2$-cochain $c$ assigns a flux to each face, and $\delta^1x=c$ is the discrete analogue of a curl equation on a triangulated surface. In this analogy, $\ell_3(c)=0$ is the corresponding discrete Stokes compatibility condition: the oriented total flux through the closed surface $\partial\Delta^3\simeq S^2$ vanishes. The critical support is every face because $\ell_3$ depends on all four face defects, and the registered four-agent primitive evaluates this sum in one step. The degree-1 family of Section~\ref{sec:farkas} is the analogous algebraic construction one degree lower, with unknowns on vertices and prescribed data on edges.

Beyond this discrete-calculus reading, application-specific interpretations of the edge variables and triangle defects require a domain-specific sheaf and are not assumed by the benchmark.

\textbf{Input allocation and outputs.}
Fix $B\in\mathbb{N}$. Each agent $v$ holds a private share $s_{v,\sigma}\in\{-B,\ldots,B\}$ for every triangle $\sigma\ni v$, and
\[
c_\sigma=\sum_{v\in\sigma}s_{v,\sigma},\qquad
\ell_3(c)=\sum_{v\in A}a_v,\qquad
a_v:=\sum_{\sigma\ni v}\varepsilon_\sigma s_{v,\sigma},
\]
where $\varepsilon_\sigma\in\{\pm1\}$ is the sign of $\sigma$ in \eqref{eq:ell3}. The shares make the triangle data genuinely distributed. Whether the three shares of a triangle can be aggregated atomically is a property of the system interaction model $K_S$, not of the fixed task complex $K_T$. Agent $0$ is the coordinator. We use the degree-2 analogues
\begin{itemize}
\item $\mathsf{T}^{(2)}_1$ (decision): agent $0$ outputs whether $[c]=0$;
\item $\mathsf{T}^{(2)}_3$ (designated-simplex, minimum-support repair): when $[c]\neq0$, agent $0$ outputs the correction to a fixed designated $2$-simplex $\sigma_0$. If $\varepsilon_{\sigma_0}\in\{\pm1\}$ is its coefficient in $\ell_3$, the required correction is $d=-\varepsilon_{\sigma_0}\ell_3(c)$. Thus a correction supported on one $2$-simplex always exists, and computing it is equivalent to recovering $\ell_3(c)$ exactly.
\end{itemize}

\textbf{Communication model.}
The model follows Section~\ref{sec:experiment}: one synchronous round, deterministic protocols, coordinator output, and a per-agent congestion budget. As in the degree-1 instantiation, we set the pairwise per-channel cap to $b=\beta$. Let
\[
\begin{aligned}
h_B &= \left\lceil\log_2(2B+1)\right\rceil, \qquad
w_B = \left\lceil\log_2(6B+1)\right\rceil,\\
q_\sigma &= \left\lceil\log_2(6B+1)\right\rceil, \qquad
q_A = \left\lceil\log_2(24B+1)\right\rceil.
\end{aligned}
\]
The registered primitives are pairwise channels, three-agent primitives $g_\sigma=c_\sigma$ of output width $q_\sigma$, and an all-agent primitive $g_A=\ell_3(c)$ of output width $q_A$. For every deterministic protocol, the subset of available primitives that it invokes is fixed as part of the registered protocol before the input is revealed; invocation decisions are not input-dependent. Only invoked outputs are charged. In one round every message and every primitive output is a simultaneous function of the initial shares; an agent therefore cannot receive a primitive output and forward it within the same round.

\textbf{Systems.}
All systems have the same agents, input allocation, and per-agent budget $\beta$. For the primary class-restoring ablation the registered base is $K_0=\partial\Delta^3$.
\begin{itemize}
\item $S_{\mathrm{fill}}$: complete pairwise channels, the four three-agent primitives, and the all-agent primitive. Its induced complex is $\Delta^3$ with the canonical constant-sheaf extension, and
\[
\jmath^*:H^2(\Delta^3;\underline{\mathbb{R}})=0
\longrightarrow H^2(K_T;\underline{\mathbb{R}})\cong\mathbb{R}
\]
has zero image, so the all-agent primitive is class-filling.
\item $S_{\mathrm{face}}$: the same system with the all-agent primitive removed, i.e.\ the full primitive-closure ablation
\[
\mathsf{P}_A(K_0)=\{\tau\subseteq A:|\tau|\geq4\}=\{[0,1,2,3]\}.
\]
Its induced complex is $\partial\Delta^3=K_T$, so the class extends and the ablation is class-restoring. Panel~(\textbf{c}) of Figure~\ref{fig:classfilling} depicts this configuration. This is a \emph{same-2-skeleton} ablation: all four three-agent primitives are retained.
\item $S_{\mathrm{pair}}$: pairwise channels only, with induced complex $\operatorname{skel}_1(\Delta^3)$. This is a base-topology sanity check rather than the primary ablation control.
\end{itemize}

\begin{Proposition}[One-round bottleneck in degree 2]
\label{prop:deg2bottleneck}
In the coordinator-output model above, no deterministic one-round protocol of $S_{\mathrm{pair}}$ solves $\mathsf{T}^{(2)}_1$ when $\beta<3w_B$. The same bound holds for $S_{\mathrm{face}}$, despite its four additional registered three-agent primitives.
\end{Proposition}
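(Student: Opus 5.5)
The plan is a fooling-set (communication-complexity) argument in the style of Proposition~\ref{prop:one-round}. First I fix what the coordinator can see after one round. Because every message and primitive output is a simultaneous function of the initial shares and nothing can be forwarded within the round, agent $0$'s view consists of four parts: its own shares, the messages it receives directly from agents $1,2,3$, and the outputs $c_\sigma$ of those invoked three-agent primitives with $0\in\sigma$. The primitive on $[1,2,3]$ never reaches agent $0$, and pairwise edges among non-coordinators are irrelevant, as in Section~\ref{sec:experiment}. The whole argument rests on one fooling lemma. Fix $i\in\{1,2,3\}$ and let $U_i$ be the set of agent $i$'s shares on triangles whose output agent $0$ does not receive. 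Suppose two inputs differ only in the $U_i$-shares of agent $i$ and give the same message from $i$ to $0$, yet change the signed contribution $\sum_{\sigma\in U_i}\varepsilon_\sigma s_{i,\sigma}$. Then agent $0$ has an identical view on both. Adjusting agent $0$'s own shares (its $a_0$ ranges over $\{-3B,\ldots,3B\}$) makes $\ell_3=0$ on the first input and $\neq 0$ on the second. Hence agent $i$'s message must take at least as many values as this contribution does, namely $2|U_i|B+1$.

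\emph{Case $S_{\mathrm{pair}}$.} Here no primitive is invoked, so $|U_i|=3$ and $a_i$ ranges over $6B+1$ values. Each of agents $1,2,3$ therefore needs at least $w_B$ bits on its link to $0$. Agent $0$ then receives at least $3w_B>\beta$ bits, which contradicts the congestion budget.

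\emph{Case $S_{\mathrm{face}}$.} Since invocation is fixed before the input is revealed, I split on the number $k\in\{0,1,2,3\}$ of invoked primitives on $[0,i,j]$. Each such primitive charges agent $0$ exactly $q_\sigma=w_B$ received bits. Every agent $i$ always retains $[1,2,3]\in U_i$, together with each triangle $[0,i,j]$ whose primitive is not invoked. The bookkeeping then runs as follows.
\begin{itemize}
\item $k=0$: this is the pairwise bound, $3w_B$ bits.
\item $k=1$, on $[0,1,2]$ say: the cost is $w_B$ for the primitive, plus $w_B$ from agent $3$ (with $|U_3|=3$), plus at least $\lceil\log_2(4B+1)\rceil$ from each of agents $1$ and $2$ (with $|U|=2$). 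This totals at least $3w_B$, because $2\lceil\log_2(4B+1)\rceil\ge\lceil\log_2(6B+1)\rceil$.
\item $k=2$: the cost is $2w_B$ plus at least $h_B+2\lceil\log_2(4B+1)\rceil\ge w_B$.
\item $k=3$: the three primitives alone cost $3w_B$.
\end{itemize}
In every case agent $0$ must receive at least $3w_B>\beta$ bits, so no one-round protocol in $S_{\mathrm{face}}$ solves $\mathsf{T}^{(2)}_1$.

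The main obstacle is the $S_{\mathrm{face}}$ accounting. The fooling variation must leave every primitive output seen by agent $0$ unchanged, which is why it may only move agent $i$'s shares on unseen triangles. The resulting per-agent lower bounds must then be combined with the primitive charges into the uniform total $3w_B$. I will check the elementary inequalities $2\lceil\log_2(4B+1)\rceil\ge w_B$ and $h_B+2\lceil\log_2(4B+1)\rceil\ge w_B$ for all $B\ge1$ by comparing $(4B+1)^2\ge 6B+1$ before taking ceilings.
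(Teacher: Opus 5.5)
Your proposal is correct and follows essentially the same route as the paper. Both run a fooling argument on each non-coordinator's signed contribution from unseen triangles (the paper's $b_v$, with $k_v=|U_v|$), then split on which coordinator-visible face primitives are invoked, obtaining the same costs $3w_B$, $2w_B+2\lceil\log_2(4B+1)\rceil$, $2w_B+h_B+2\lceil\log_2(4B+1)\rceil$, and at least $3w_B$. The only difference is that the paper proves strict inequalities showing that invoking no face primitive is the unique minimiser, whereas your non-strict bound $2\lceil\log_2(4B+1)\rceil\ge w_B$ already suffices for the proposition as stated.
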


\begin{proof}
For $S_{\mathrm{pair}}$, the argument of Proposition~\ref{prop:one-round} applies with $a_v$ in place of $c_i$. If agent $i\neq0$ sent the same message for two local inputs with $a_i\neq a_i'$, the remaining independent shares can be completed so that the first instance has $\ell_3=0$; the second then has $\ell_3=a_i'-a_i\neq0$ while the coordinator's view is unchanged. Hence each of the three non-coordinators must distinguish all $6B+1$ possible values of $a_i$ and therefore send at least $w_B$ bits. Exactness requires at least $3w_B$ received bits at agent $0$.

For $S_{\mathrm{face}}$, only the three triangle primitives containing agent $0$ can affect the coordinator in one round. Let $T$ be the subset of those primitives that are invoked, and let $R=K_{T,2}\setminus T$. The coordinator knows the signed triangle contributions for $\sigma\in T$ exactly. For each $v\neq0$, the residual statistic is
\[
b_v=\sum_{\substack{\sigma\in R\\ v\in\sigma}}\varepsilon_\sigma s_{v,\sigma},
\]
a signed sum of $k_v=|\{\sigma\in R:v\in\sigma\}|$ independent shares. The same fooling argument forces at least $\lceil\log_2(2Bk_v+1)\rceil$ bits from agent $v$. Thus every exact protocol has receive cost at least
\[
q_\sigma|T|+\sum_{v=1}^{3}\left\lceil\log_2(2Bk_v+1)\right\rceil.
\]
It remains to minimise this over the eight subsets $T\subseteq\operatorname{St}(0)$. Each $v\neq0$ lies in three triangles, two of which contain agent $0$, so $k_v=3-|T\cap\operatorname{St}(v)|$. Writing $u_B=\lceil\log_2(4B+1)\rceil$ and using $q_\sigma=w_B$, the four cases up to symmetry give
\[
\begin{aligned}
|T|=0:\ &k=(3,3,3), &\text{cost} &= 3w_B, \\
|T|=1:\ &k=(2,2,3), &\text{cost} &= 2w_B+2u_B, \\
|T|=2:\ &k=(1,2,2), &\text{cost} &= 2w_B+h_B+2u_B, \\
|T|=3:\ &k=(1,1,1), &\text{cost} &= 3w_B+3h_B.
\end{aligned}
\]
Since $6B+1<2(4B+1)$ we have $w_B\leq u_B+1$, and $u_B\geq3$ for every $B\geq1$; hence the $|T|=1$ cost exceeds the $|T|=0$ cost by $2u_B-w_B\geq u_B-1\geq2>0$. The $|T|=2$ cost exceeds the $|T|=1$ cost by $h_B>0$, and the $|T|=3$ cost exceeds the $|T|=0$ cost by $3h_B>0$. The minimum is therefore attained at $T=\varnothing$ and equals $3w_B$: invoking any of the three-agent outputs available to the coordinator strictly increases the exact-protocol lower bound.
\end{proof}

Since $S_{\mathrm{fill}}$ needs only the $q_A$-bit output of $g_A$, the registered separation window is
\[
q_A\leq\beta<3w_B.
\]

\textbf{Exhaustive evaluation.}
We instantiate $B=1$, so $h_B=2$, $w_B=q_\sigma=3$, and $q_A=5$; the registered separation window is $5\leq\beta<9$. The input domain is $\{-1,0,1\}^{12}$, i.e.\ $3^{12}=531{,}441$ inputs, of which $73{,}789$ ($13.88\%$) are realizable and $457{,}652$ are unrealizable. We exhaustively optimise all four integer budgets $\beta\in\{5,6,7,8\}$ in this window.

The search is exhaustive over all deterministic one-round coordinator-output protocols by the following reduction. Write $U$ for the statistic agent $0$ knows exactly after its chosen primitive invocations, so that
\[
\ell_3 \;=\; U+\sum_{v\neq0}b_v ,
\]
with $b_v$ as in the proof of Proposition~\ref{prop:deg2bottleneck}. The cochains $U$ and $b_1,b_2,b_3$ are signed sums over pairwise disjoint sets of shares; since the shares are drawn independently and uniformly, these four statistics are mutually independent. Both tasks depend on agent $v$'s input only through $b_v$, so a message that separates two inputs with the same $b_v$ refines the coordinator's view without changing the conditional distribution of $\ell_3$; an optimal deterministic message is therefore, without loss of generality, a function of $b_v$ alone, i.e.\ a partition of the attainable values of $b_v$ into at most $2^{b}$ blocks for a $b$-bit message. For any fixed tuple of partitions the optimal coordinator takes the majority label for $\mathsf{T}^{(2)}_1$ and the modal value for $\mathsf{T}^{(2)}_3$ in each induced view, and refining a partition cannot lower either optimum; granting a sender more bits likewise cannot lower it. It therefore suffices to search the maximal partitions of each $b_v$-support over the undominated receive-bit allocations, exactly as in Section~\ref{sec:experiment}.

At $\beta=5$ and $\beta=6$, some undominated allocations leave three senders simultaneously compressed. To make these cases tractable without changing the protocol class, the optimiser exploits the additivity of the objective over message-block triples: the contribution of each distinct block triple is computed once and then gathered over all maximal partition triples. This is an implementation-level acceleration of the same exhaustive partition search. It reproduces the direct optimiser's previously reported optima at $\beta=7,8$ and completes the two remaining budgets. Randomised protocols are outside this search, as they are in degree~1. Table~\ref{tab:deg2} reports the resulting exact optima; $\mathsf{T}^{(2)}_1$ and $\mathsf{T}^{(2)}_3$ are optimised separately.

\begin{table}[H]
\caption{Exhaustive evaluation across the complete registered separation window for the degree-2 task model $(\partial\Delta^3,\underline{\mathbb{R}})$ with $B=1$. Each system entry reports $(\mathsf{T}^{(2)}_1,\mathsf{T}^{(2)}_3)$; $\mathsf{T}^{(2)}_1$ is evaluated on all $531{,}441$ inputs and $\mathsf{T}^{(2)}_3$ on the $457{,}652$ unrealizable inputs. The trivial-cohomology control is solved exactly by every system at every reported budget.\label{tab:deg2}}
\centering
\begin{tabular}{ccccc}
\toprule
$\boldsymbol{\beta}$ & $\boldsymbol{S_{\mathrm{fill}}}$ & $\boldsymbol{S_{\mathrm{face}}}$ & $\boldsymbol{S_{\mathrm{pair}}}$ & \textbf{Majority baseline $\mathsf{T}^{(2)}_1$} \\
\midrule
$5$ & $1.0000/1.0000$ & $0.8884/0.4666$ & $0.8884/0.4666$ & $0.8612$ \\
$6$ & $1.0000/1.0000$ & $0.9295/0.6299$ & $0.9295/0.6299$ & $0.8612$ \\
$7$ & $1.0000/1.0000$ & $0.9453/0.7193$ & $0.9453/0.7193$ & $0.8612$ \\
$8$ & $1.0000/1.0000$ & $0.9681/0.8401$ & $0.9681/0.8401$ & $0.8612$ \\
\bottomrule
\end{tabular}
\end{table}

For $\beta=5,6,7,8$, respectively, the optimum of $S_{\mathrm{face}}$ decides $472{,}149$, $493{,}965$, $502{,}361$, and $514{,}509$ of $531{,}441$ inputs correctly, and returns the exact repair on $213{,}563$, $288{,}257$, $329{,}211$, and $384{,}488$ of $457{,}652$ unrealizable inputs. At every budget the optimum invokes none of the three three-agent primitives whose outputs are visible to the coordinator. The fourth face primitive, on $\{1,2,3\}$, cannot affect the coordinator's output within the same simultaneous round. Forcing one coordinator-visible face primitive yields $(0.8612,0.2790)$, $(0.8672,0.3841)$, $(0.8763,0.4581)$, and $(0.8944,0.5362)$ across $\beta=5,6,7,8$, respectively; forcing two is infeasible at $\beta=5$ and yields $(0.8612,0.2386)$, $(0.8612,0.2748)$, and $(0.8612,0.3383)$ at $\beta=6,7,8$.

Thus, throughout the complete registered separation window, the exhaustive optimum of $S_{\mathrm{face}}$ is attained without using any coordinator-visible three-agent primitive and is identical to the optimum of $S_{\mathrm{pair}}$. This equality is not a rounding artefact: every $S_{\mathrm{pair}}$ protocol is available to $S_{\mathrm{face}}$, so the latter can never score lower. Proposition~\ref{prop:deg2bottleneck} predicts the failure of exact protocols without the class-filling primitive, and the exhaustive search shows that the additional local three-agent services also fail to improve the optimal approximate performance anywhere in the registered window.

Relative to $S_{\mathrm{pair}}$, the system $S_{\mathrm{face}}$ adds a larger catalogue of local joint operations: four three-agent primitives, each of width $q_\sigma=3$, for a total registered output width of $4q_\sigma=12$ bits across the four services. This total is not a fully matched capacity measure, because the primitives have different recipient sets and only invoked outputs consume congestion. Nevertheless, throughout the registered separation window the exhaustive optimum of $S_{\mathrm{face}}$ is identical to that of $S_{\mathrm{pair}}$. By contrast, adding the single $q_A=5$-bit all-agent primitive makes both tasks exact. Thus, within this registered model, merely adding the larger catalogue of local three-agent primitives does not improve these tasks; the successful primitive is specifically the one whose induced $3$-simplex is class-filling. This is evidence about the semantics and placement of the registered primitives and the extension status of the class, not a general claim that the quantity of joint capacity is irrelevant.

\textbf{Control task.}
The trivial-cohomology control uses the full $2$-simplex $\{0,1,2\}$ with the task ``is $c_{012}=0$?''. For the constant sheaf on this simplex, $\delta^1$ is surjective and $H^2=0$. With the same share semantics, two $h_B=2$-bit messages from agents $1$ and $2$ to agent $0$ suffice, so every system solves the control exactly for $\beta\geq4$.

\textbf{Status-preserving robustness checks.}
Condition~(3) of Definition~\ref{def:operational} asks for a status-preserving intervention matched to the class-restoring intervention in registered cost. The present finite experiment includes two useful robustness checks, but they are not matched controls.

\textbf{C1 (service removal).} Disable two standalone three-agent service primitives while retaining the all-agent primitive $g_A$. Under the primitive-closure convention of Definition~\ref{def:operational}, retaining $g_A$ keeps the induced interaction complex equal to $\Delta^3$, so the extension status is unchanged. The exact solver uses only $g_A$, and performance remains $(1.0000,1.0000)$.

\textbf{C2 (budget reduction).} Reduce the per-agent congestion budget from $\beta=8$ to $\beta=5$. The interaction complex and extension status are unchanged, and $g_A$ still fits in budget, so performance remains $(1.0000,1.0000)$.

Both interventions remove registered capacity without changing extension status and leave the exact solution intact. However, C1 and C2 alter different cost coordinates from the $S_{\mathrm{fill}}\to S_{\mathrm{face}}$ ablation. They therefore provide status-preserving robustness evidence, not the exactly cost-matched control required to conclude condition~(3).

\textbf{Scope of the evidence.}
Proposition~\ref{prop:deg2bottleneck} and the one-round solution of $S_{\mathrm{fill}}$ follow analytically from the registered communication model and additive reduction; the entries of Table~\ref{tab:deg2} are exact finite computations over the full input domain for all four integer budgets in the registered separation window. Together with Section~\ref{sec:experiment}, the degree-2 experiment establishes a second class-specific finite-model separation, now under a same-2-skeleton class-restoring ablation, and provides status-preserving robustness evidence rather than a matched-control result. It does not by itself establish full operational support under Definition~\ref{def:operational}, because an exactly cost-matched status-preserving intervention is still absent. As in degree 1, the finite computations do not measure learned-agent behaviour.

\textbf{The general pattern.}
For every $n\geq1$, the constant-sheaf task model $(\partial\Delta^{n+1},\underline{\mathbb{R}})$ has $\beta^n=1$. Its top-degree cohomology is detected by the oriented facet-sum functional
\[
\ell^{(n)}(c)=\sum_{i=0}^{n+1}(-1)^i c\!\left(d_i[0,\ldots,n+1]\right),
\]
whose support is all $n$-simplices. Adding the missing $(n+1)$-simplex kills this top-degree class. Section~\ref{sec:triangle} is the $n=1$ boundary-simplex instance of this pattern; the cycle benchmark of Section~\ref{sec:experiment} is a broader degree-1 analogue, while the present section gives the $n=2$ boundary-of-a-tetrahedron instance.

A further degree-2 observation is specific to invertible rank-one restrictions. On $\partial\Delta^3\simeq S^2$, any rank-one local system over $\mathbb{R}$ is trivial because $\pi_1(S^2)=0$. Thus the monodromy mechanism of Section~\ref{sec:twisted}, which can remove the degree-1 class on a cycle, has no nontrivial rank-one local-system analogue on the tetrahedral boundary. This statement does not cover general capacity sheaves with noninvertible or higher-rank restriction maps, whose cohomology can change for other reasons.

\section{Limitations and Research Agenda}

The present analysis has six limits. \textbf{(1) Linearity.} Linear cellular sheaves approximate systems that may be nonlinear, stochastic, adaptive, and history-dependent. Possible extensions include sheaves of nonlinear spaces and persistence summaries over parameterised linearisations. \textbf{(2) Model identification.} The interaction complex is inferred rather than observed directly. Treating message traffic as evidence of higher-order interaction can create spurious simplices and hence spurious signatures; the operational simplex criterion in Definition~\ref{def:opsimplex} reduces this risk but cannot remove it. \textbf{(3) Primitive semantics.} In the benchmarks, class-filling means that a registered atomic reduction is available at a stated cost. Changing the function or cost model, or charging the primitive's internal input collection under a different communication model, may change the prediction. \textbf{(4) Descriptive scope.} Proposition~\ref{prop:correspondence} gives an exact algebraic criterion for the linear cochain-realization family in every degree, and Section~\ref{sec:deg2} instantiates degree 2. It does not imply that an arbitrary higher-order multi-agent task has the same semantics; domain-specific reductions and nonlinear task families remain open. \textbf{(5) Operational-support control.} Section~\ref{sec:deg2} adds status-preserving stress tests, but their registered costs are not exactly matched to the class-restoring ablation in every cost dimension. Full operational support under condition~(3) of Definition~\ref{def:operational} therefore remains a hypothesis rather than a conclusion of the finite benchmarks. \textbf{(6) Static, undirected structure.} Directed, asynchronous, or changing interactions require a different construction.

The reduction library should cover multi-sensor fusion, cooperative planning, and general linear constraints, with task-specific meanings for the dual cohomological witnesses. Capacity sheaves should also be learned from traces and then tested on held-out tasks. On the communication side, Propositions~\ref{prop:one-round} and~\ref{prop:deg2bottleneck} should be extended to multi-round and bit-complexity bounds in explicit models using tools from~\cite{herlihy2013}. Other open problems include domain-specific higher-degree correspondence tasks beyond the linear boundary-simplex construction and exactly cost-matched status-preserving controls. The benchmarks also permit direct comparisons with PID, coalition values, and integration measures.

\textbf{Multiscale composition.} Nested organisations suggest a multiscale model in which a base complex encodes inter-team interaction and fibres encode intra-team interaction. We do not develop this here for two reasons: we have no registered construction that produces a fibration-like map $p:E\to B$ from interaction complexes satisfying the standard hypotheses, and the Leray--Serre spectral sequence computes ordinary cohomology of the total space, so a capacity-sheaf version would require a Leray-type spectral sequence for cellular sheaf cohomology over the base. We record only what the constant-coefficient case would say if such a construction were available. Under the standard hypotheses~\cite{mccleary2001},
\[
E_2^{p,q}=H^p\!\big(B;\mathcal{H}^q(X;\mathbb{R})\big)\Longrightarrow H^{p+q}(E;\mathbb{R}).
\]
A nonzero $E_2^{1,1}$, recording degree-1 structure at both the team and organisation scales, need not give a nonzero class in $H^2(E;\mathbb{R})$. In this first-quadrant spectral sequence, the only differential that can affect position $(1,1)$ is $d_2:E_2^{1,1}\to E_2^{3,0}$: for $r\geq3$ the outgoing differential lands in negative vertical degree and the incoming one starts in negative horizontal degree. Hence $E_3^{1,1}=E_\infty^{1,1}$, and the possible failure of cross-scale composition is concentrated in $d_2$. Identifying registered constructions for which $d_2$ vanishes, and a sheaf-level analogue of the statement, are open problems.

\section{Conclusions}

TSI provides a way to turn a cohomological obstruction into a testable hypothesis about multi-agent interaction. The central distinction is simple: the \emph{task model} carries the structural signature, while the \emph{system} provides behavioural evidence of capability. In general, a nonzero $H^n(K;\mathcal{F})$ identifies a sheaf-relative obstruction at cochain degree $n$ under the registered interaction structure and compatibility maps.

For the linear cochain-realization tasks studied here, Proposition~\ref{prop:correspondence} shows in every degree that a cocycle datum is realizable precisely when its cohomology class vanishes; equivalently, nonzero classes are exactly the task-relative irreducible cocycle demands of Definition~\ref{def:irreducibility}. Degree 1 has the direct agent-state interpretation developed through distributed difference constraints; Section~\ref{sec:deg2} provides a degree-2 realization on the tetrahedral boundary. The worked examples also show why the sheaf is essential: changing $\mathcal{F}$ can remove a class from a cycle or create one even when the underlying complex is contractible. The signature must therefore be defined for the full pair $(K,\mathcal{F})$, not for topology alone.

The fixed-task benchmarks illustrate how the structural claim can be tested operationally. In degree 1, the registered all-agent additive primitive solves both class-carrying tasks exactly while optimal pairwise systems remain limited under the same-1-skeleton ablation. In degree 2, the analogous four-agent primitive solves the tetrahedral-boundary tasks exactly while the same-2-skeleton system remains limited throughout the registered separation window $5\leq\beta<9$. The degree-2 experiment sharpens the comparison in one respect: the ablated system retains every three-agent primitive, yet throughout that entire window its exhaustive optimum coincides with that of pairwise-only communication. Those local services form a larger catalogue of registered joint operations, but their aggregate output width is not a fully matched capacity measure because recipients differ and only invoked outputs are charged. The result therefore does not show that the quantity of joint capacity is generally irrelevant; it shows that these status-preserving local primitives do not improve the registered tasks, whereas the class-filling four-agent primitive does. Both benchmarks include trivial-cohomology controls, and the degree-2 experiment adds status-preserving robustness checks. These are class-specific finite-model separations under registered one-round communication models; they do not yet establish full operational support under an exactly cost-matched status-preserving intervention.

The main contribution of TSI is a disciplined way to state such questions: define the task and its capacity sheaf first, identify the relevant cohomological obstruction, register the interaction primitive and its cost, and then test whether performance changes specifically when that obstruction is filled or restored.

\section*{Statements and Declarations}
\subsection*{Funding}
This research received no external funding.

\subsection*{Competing interests}
The authors declare no conflict of interest.

\subsection*{Author contributions}
Steven Motta and Alessandro Rozza contributed to conceptualization, methodology, and formal analysis. Steven Motta, Davide Mambelli, and Alessandro Rozza contributed to writing the original draft and to reviewing and editing the manuscript. All authors have read and approved the manuscript.

\subsection*{Data availability}
All benchmark inputs are synthetic and are generated by exhaustive enumeration of the finite domains specified in Sections~\ref{sec:experiment} and~\ref{sec:deg2}; no external dataset is required. The benchmark inputs and result counts can be regenerated using the publicly available code identified in the Code availability statement.

\subsection*{Code availability}
The benchmark implementation is publicly available in the TSI simulation repository~\cite{distributedcognition2026}. The version used for this submission is commit \nolinkurl{0dfdc90fcf69e585a5192e7b4e9a5b91ed4594d2}.

\end{document}